\newtheorem{assumption}{Assumption}
\newtheorem{corollary}{Corollary}
\newtheorem{lemma}{Lemma}
\newtheorem{definition}{Definition}
\newtheorem{theorem}{Theorem}
\tikzstyle{block} = [draw,rectangle,thick,minimum height=2em,minimum width=2em]
\tikzstyle{sum} = [draw,circle,inner sep=0mm,minimum size=2mm]
\tikzstyle{connector} = [->,thick]
\tikzstyle{line} = [thick]
\tikzstyle{branch} = [circle,inner sep=0pt,minimum size=1mm,fill=black,draw=black]
\tikzstyle{guide} = []
\title{Non-Parametric Neuro-Adaptive Control \\ Subject to Task Specifications}
\author{Christos K. Verginis   \\
Oden Institute for Computational Engineering and Sciences\\
University of Texas at Austin \\
Austin, TX78705, USA \\
\texttt{cverginis@utexas.edu} \\
\And
Zhe Xu \\
School for Engineering of Matter, \\Transport, and Energy  \\
Arizona State University \\
Tempe, AZ85281, USA \\
\texttt{xzhe1@asu.edu} \\
\AND
Ufuk Topcu \\
Oden Institute for Computational Engineering and Sciences\\
University of Texas at Austin \\
Austin, TX78705, USA \\
\texttt{utopcu@utexas.edu}
}
\begin{document}

\maketitle

\begin{abstract}
We develop a learning-based algorithm for the control
of autonomous systems governed by unknown, nonlinear dynamics to
satisfy user-specified spatio-temporal tasks expressed as signal temporal logic specifications. 
Most existing algorithms either assume certain parametric forms for the unknown dynamic terms or resort to unnecessarily large control inputs in order to provide theoretical guarantees. 
The proposed algorithm addresses these drawbacks by integrating neural-network-based learning with adaptive control. More specifically, the algorithm learns a controller, represented as a neural network, using training data that correspond to a collection of system parameters and tasks. These parameters and tasks are derived by  varying the nominal parameters and the spatio-temporal constraints of the user-specified task, respectively. 
 It then incorporates this neural network into an online closed-form adaptive control policy in such a way that the resulting behavior satisfies the user-defined task. 
The proposed algorithm does not use any a priori information on the unknown dynamic terms or any approximation schemes. We provide formal theoretical guarantees on the satisfaction of the task. Numerical experiments on a robotic manipulator and a unicycle robot demonstrate that the proposed algorithm guarantees the satisfaction of 50 user-defined tasks, and outperforms control policies that do not employ online adaptation or the neural-network controller. Finally, we show that the proposed algorithm achieves greater performance than standard reinforcement-learning algorithms in the pendulum benchmarking environment.
\end{abstract}

\section{Introduction} \label{sec:intro}

Learning and control of autonomous systems with uncertain dynamics is a critical and challenging topic that has been widely studied during the last decades. One can identify plenty of motivating reasons,	
ranging from uncertain geometrical or dynamical parameters and unknown exogenous disturbances to abrupt faults that significantly modify the dynamics. There has been, therefore, an increasing need for developing  control algorithms that do not rely on the underlying system dynamics. 
At the same time, such algorithms 
can be easily implemented on different, heterogeneous systems, since one does not need to be occupied with the tedious computation of the  dynamic terms.

There has been a large variety of works that tackle the problem of control of autonomous systems with uncertain dynamics, exhibiting, however, certain limitations. 
The existing algorithms are based on adaptive and learning-based approaches or the so-called funnel control (\cite{krstic1995nonlinear,vamvoudakis2010online,berger2018funnel,bechlioulis2014low,joshi2020asynchronous,capotondi2020online,bertsekas96,sutton2018reinforcement}). Nevertheless, adaptive control methodologies are restricted to system dynamics that can be linearly parameterized with respect to certain unknown parameters (e.g., masses, moments of inertia), assuming the system structure perfectly known; funnel controllers employ reciprocal terms that drive the control input to infinity when the tracking error approaches a pre-specified funnel function, 
creating thus unnecessarily large control inputs that might damage the system actuators. Data-based learning approaches either consider some system characteristic known (e.g., a nominal model, Lipschitz constants, or global bounds), or use neural networks to learn a tracking controller or the system dynamics; the correctness of such methods, however, relies on strong assumptions on the parametric approximation by the neural network and knowledge of the underlying radial basis functions. Finally, standard reinforcement-learning techniques (\cite{bertsekas96,sutton2018reinforcement}) usually assume certain state and/or time discretizations of the system and rely on exhaustive search of the state space, which might lead to undesirable transient properties (e.g., collision with obstacles while learning).

\subsection{Contributions and Significance}

This paper addresses the control of systems with continuous, \textit{unknown} nonlinear dynamics subject to task specifications expressed as signal interval temporal logic (SITL) constraints (\cite{lindemann2020efficient}).  Our main contribution lies in the  development of a  learning-based control 
algorithm that guarantees the accomplishment of a given task using only mild assumptions on the system dynamics. The algorithm draws a novel connection between adaptive control and learning with neural network representations, and consists of the following steps.
Firstly, it trains a neural network that aims to learn a controller that accomplishes a given task from data obtained off-line.  
Secondly, it calculates an open-loop trajectory that yields the execution of the task if followed by the system, while  neglecting the dynamics. Finally, we develop an online adaptive feedback control policy that uses the trained network to guarantee convergence to the open-loop trajectory and hence satisfaction of the task. Essentially, our approach builds on a combination of off-line trained controllers and on-line adaptations, which was recently shown to significantly enhance performance  with respect to single use of the off-line part (\cite{bertsekas2021lessons}). The proposed approach is particularly suitable for cases when engineering systems undergo purposeful modifications (e.g., the substitution of a motor/link in a robotic arm or exposure to new working environments) which might change their dynamics or operating conditions. In such cases, the goal is not to re-design new model-based controllers for the modified systems, but rather exploit the already designed ones and guarantee correctness via intelligent online adaptation policies. 


The major significance of our contribution is twofold. Firstly, we guarantee the theoretical correctness of the proposed algorithm by considering only mild conditions on the neural network, removing the long-standing assumptions on parametric approximations and boundedness of the estimation error. Secondly, we demonstrate via the experimental results the generality of the algorithm with respect to different tasks and system parameters. That is, 
the training data that we generate for the training of the neural network in the first step correspond
to tasks that are different, in terms of spatiotemporal specifications, from the given one to be executed\footnote{The task difference is illustrated in Section \ref{sec:exp res}.}. 
Additionally, we employ systems with different dynamic parameters to generate these data. We evaluate the proposed algorithm in numerous scenarios comprising different variations of the given task and different system dynamic parameters, which do not necessarily match the training data. We show that the algorithm, owing to its adaptation properties, is able to guarantee the satisfaction of the respective tasks in all the aforementioned scenarios by using the same neural network.

\subsection{Related work}

A large variety of previous works considers neuro-adaptive control with stability guarantees, focusing on the optimal control problem (\cite{vamvoudakis2010online,yang2020safe,cheng2007neural,fan2018robust,kiumarsi2017optimal,zhao2020finite,vrabie2009neural,sun2020continuous,kamalapurkar2015approximate,huang2018neuro,mo2019neuro,joshi2020asynchronous}). Nevertheless, the related works draw motivation from the neural network density property (see, e.g., (\cite{cybenko1989approximation}))\footnote{A sufficiently large neural network can approximate a continuous function arbitrarily well in a compact set.} and assume sufficiently small approximation errors and linear parameterizations of the unknown terms (dynamics, optimal controllers, or value functions), which is also the case with traditional adaptive control methodologies (\cite{krstic1995nonlinear,hong2009robust,chen2019nussbaum,huang2018adaptive}). 
This paper relaxes the aforementioned assumptions and proposes a \textit{non-parametric} neuro-adaptive controller, whose stability guarantees rely on a mild boundedness condition of the closed-loop system state that is driven by the learned controller. The proposed approach exhibits similarities with (\cite{liu1994integrated}), which employs off-line-trained neural networks with online feedback control, but fails to provide convergence guarantees.

Other learning-based related works include modeling with Gaussian processes~(\cite{capotondi2020online,leahy2019control,jain2018learning,berkenkamp2015safe}), or use neural networks~(\cite{ma2020stlnet,shah2018bayesian,yan2021neural,liu2021recurrent,hahn2020teaching,cai2021modular,wang2020continuous,camacho2019towards,hahn2020teaching,riegel2020logical,hu2020reach,ivanov2019verisig}) to accomplish reachability, verification or temporal logic specifications. Nevertheless, the aforementioned works either use partial information on the underlying system dynamics, or do not consider them at all. In addition, works based on Gaussian processes usually propagate the dynamic uncertainties, possibly leading to conservative results. Similarly, data-driven model-predictive control techniques (\cite{nubert2020safe,maddalena2020neural}) use data to over-approximate  additive disturbances or are restricted to linear systems.

Control of unknown nonlinear continuous-time systems has been also tackled in the literature by using funnel control, without necessarily using off-line data or dynamic approximations (\cite{berger2018funnel,bechlioulis2014low,lindemann2017prescribed,verginis2018timed,verginis2021kdf}). Nevertheless, funnel controllers usually depend on reciprocal time-varying barrier functions that drive the control input to infinity
when the error approaches a pre-specified funnel, creating thus unnecessarily large control inputs that might damage the system actuators.

\section{Preliminaries and Problem Formulation}


\subsection{Signal interval temporal logics}

Let  $y:\mathbb{R}_{\geq 0}\to\mathbb{R}^n$ be a continuous-time signal. Signal interval temporal logic (SITL)   consists  of  predicates $\mu$ that  are  obtained  after  evaluation  of  a continuously  differentiable  predicate  function $h:\mathbb{R}^n\to\mathbb{R}$ (\cite{lindemann2020efficient}).   For $\zeta\in\mathbb{R}^n$,  let $\mu \coloneqq \top$ if $h(\zeta)\geq0$ and $\mu\coloneqq \bot$ if $h(\zeta)<0$.  
The SITL syntax is given by 
\begin{align*}
	\varphi \coloneqq \top \ | \ \mu \ | \ \neg \varphi \ | \ \varphi_1 \land \varphi_2 \ | \varphi_1 U_{[a,b]} \varphi_2,
\end{align*}
where $\varphi_1$ and $\varphi_2$ are
SITL formulas, $[a,b] \subset \mathbb{R}_{\geq 0}$, with $b > a$ is a time interval, and $U_{[a,b]}$ encodes  the  until  operator.  Define the eventually and always operators as $F_{[a,b]}\varphi \coloneqq \top U_{[a,b]} \varphi$ and $G_{[a,b]}\varphi \coloneqq \neg ¬F_{[a,b]}\neg \varphi$.  The satisfaction relation $(y,t)\models \varphi$ denotes that the signal $y:\mathbb{R}_{\geq 0}\to\mathbb{R}^n$ satisfies $\varphi$ at time $t$.  The SITL semantics are recursively given by $(y,t)\models \mu$ if and only if $h(y(t))\geq 0$, $(y,t)\models \neg \varphi$ if and only if $\neg((y,t)\models \varphi)$, $(y,t)\models \varphi_2 \land \varphi_2$ if and only if $(x,t) \models \varphi_1$ and $(y,t) \models \varphi_2$, and $(y,t) \models \varphi_1 U_{[a,b]} \varphi_2$ if and only if there exists a $t_1\in[t+a,t+b]$ such that $(y,t_1)\models \varphi_2$ and $(y,t_2)\models \varphi_1$, for all $t_2 \in [t,t_1]$.   A  formula $\varphi$ is  satisfiable  if there exists a $y:\mathbb{R}_{\geq 0}\to \mathbb{R}^n$ such that $(y,0)\models\varphi$. 

\subsection{Problem Statement} \label{sec:problem statement}

Consider a continuous-time dynamical system governed by the $2$nd-order 
continuous-time  dynamics  
	\begin{align}  \label{eq:dynamics}
		\ddot{x} &= f(\bar{x},t) + g(\bar{x},t)u(\bar{x},t), 
	\end{align}
where $\bar{x}\coloneqq [x^\top,\dot{x}^\top]^\top \in \mathbb{R}^{2n}$, $n \in \mathbb{N}$, is the  system state, assumed available for measurement,
and $u:\mathbb{R}^{2n}\times[0,\infty)\to\mathbb{R}^n$ is the time-varying feedback-control input.
The terms $f(\cdot)$ and $g(\cdot)$ are nonlinear vector fields that are locally Lipschitz in $\bar{x}$ over $\mathbb{R}^{2n}$ for each fixed $t\geq 0$, and uniformly bounded in $t$ over $[0,\infty)$ for each fixed $\bar{x}\in\mathbb{R}^{2n}$. The dynamics (\ref{eq:dynamics}) comprise a large class of nonlinear dynamical systems (\cite{zhong2020unsupervised,yu1996neural,doya1997efficient}) that capture contemporary engineering problems in mechanical, electromechanical and power electronics applications, such as rigid/flexible robots, induction motors and DC-to-DC converters, to name a few. The continuity in time and state provides a direct link to the actual underlying system, and we further do not require any time or state discretizations. 

We consider that $f(\cdot)$ and $g(\cdot)$ are completely unknown; we do not assume any knowledge of the structure, Lipschitz constants, or bounds, and we do not use any scheme to approximate them.
Nevertheless, we do assume that $g(\bar{x},t)$ is positive definite:
\begin{assumption} \label{ass:g pd}
	The matrix $g(\bar{x},t)$ is positive definite, for all $(\bar{x},t)\in \mathbb{R}^{2n} \times [0,\infty)$. 
\end{assumption}
Such assumption is a sufficiently controllability condition for (\ref{eq:dynamics}); intuitively, it states that the multiplier of $u$ (the input matrix) does not change the direction imposed to the system by the underlying control algorithm.
Systems not covered by (\ref{eq:dynamics}) or Assumption \ref{ass:g pd} consist of underactuated or non-holonomic systems, such as unicycle robots or underactuated aerial vehicles. Nevertheless, we extend our results for a non-holonomic unicycle vehicle in Section \ref{sec:control design}. Moreover, the $2$nd-order model (\ref{eq:dynamics}) can be easily extended to account for higher-order integrator systems (\cite{slotine1991applied}).

Consider now a time-constrained task expressed as an SITL formula $\varphi$ over $x$. The objective of this paper is to construct a time-varying feedback-control algorithm $u(\bar{x},t)$ such that the output of the closed-loop system (\ref{eq:dynamics}) satisfies $\varphi$, i.e., $(x(t),t) \models \varphi$.

\section{Main Results}

This section describes the proposed algorithm, which consists of three steps. Firstly, it learns a controller, represented as a neural network, using training data that correspond to a collection of different tasks and system parameters.
Secondly, it uses formal methods tools to compute an \textit{open-loop} trajectory that satisfies the given task. Finally, we design an adaptive, time-varying feedback controller that uses the neural-network approximation and guarantees tracking of the open-loop trajectory, consequently achieving satisfaction of the task.

\subsection{Neural-network learning} \label{subsec:NN}

We assume the existence of offline data from a finite set of $T$ system trajectories that satisfy a collection of SITL tasks, including the one modeled by $\varphi$, and possibly produced by systems with different dynamic parameters. The data from each trajectory $i\in\{1,\dots,T\}$ comprise a finite set of triplets $\{\bar{x}_s(t),t,u_s(t)\}_{t\in \mathcal{T}_i}$, where $\mathcal{T}_i$ is a finite set of time instants, $\bar{x}_s(t)\in\mathbb{R}^{2n}$ are system states, and $u_s(t) \in \mathbb{R}^n$ are the respective control inputs, compliant with the dynamics (\ref{eq:dynamics}). 
We use the data to train a neural network in order to approximate the respective controller $u(\bar{x},t)$. More specifically, we use the pairs $(\bar{x}_s(t),t)_{t\in\mathcal{T}_i}$ as input to a neural network, and $u_s(t)_{t\in\mathcal{T}_i}$ as the respective output targets, for all trajectories $i\in\{1,\dots,T\}$. For given $\bar{x} \in \mathbb{R}^{2n},t \in \mathbb{R}_{\geq 0}$, we denote by $u_\textup{nn}(\bar{x},t)$ the output of the neural network. Note that the controller $u(\bar{x},t)$, which the neural network aims to approximate, is not associated to the specific task modeled by $\varphi$ and mentioned in Section \ref{sec:problem statement}, but a collection of SITL tasks. Therefore, we do not expect the neural network to learn how to accomplish this specific task $\varphi$, but rather to be able to adapt to the entire collection of tasks. 
This is an important attribute of the proposed scheme, since it can generalize over the SITL tasks; that is, the specific task $\varphi$ to be accomplished can be any task of the aforementioned collection. The proposed algorithm, consisting of the trained neural network and the online feedback-control policy - illustrated in the next sections - is still able to guarantee its satisfaction. 

\subsection{Open-loop trajectory} \label{sec:ol traj}

The next step is the computation of an open-loop trajectory $p_\textup{d}:\mathbb{R}_{\geq 0} \to \mathbb{R}^n$ that satisfies $\varphi$, i.e., $(p_\textup{d}(t),t)  \models \varphi$. For this computation, we use the recent work~(\cite{lindemann2020efficient}), which proposes an efficient planning algorithm using automata to construct a set of bounded trajectories that satisfy an SITL formula. In order to accommodate the temporal aspect of an SITL formula, such a trajectory can be represented as a finite prefix followed by the infinite repetition of a finite suffix, i.e., 
	$p_\textup{d}(t) = p_\textup{d}(0:t_{f_1})\Big|[p_\textup{d}(t_{f_1}:t_{f_2})]^\omega$.
Here, $p_\textup{d}(0:t_{f_1})$ and $p_\textup{d}(t_{f_1}:t_{f_2})$ denote the trajectories $p_\textup{d}(t)$ from $0$ to $t_{f_1}$ and from $t_{f_1}$ to $t_{f_2}$ respectively, for some time instants $t_{f_1} > 0$, $t_{f_2} > t_{f_1}$. The operator $\cdot|\cdot$ denotes trajectory concatenation and the superscript $\omega$ denotes infinite repetition. 
Note that the training data of Section \ref{subsec:NN} are assumed to follow this prefix-suffix form; each trajectory is assumed to consist of a finite prefix followed by some repetitions of a finite suffix, i.e.,  $\max\{ \mathcal{T}_i \} > \kappa t_{f_2}$ for some integer $\kappa > 2$ for all trajectories $i\in\{1,\dots,T\}$. 
The procedure of computing  $p_\textup{d}(t)$ does \textit{not} take into account the dynamics (\ref{eq:dynamics}); unlike the training data used in Section \ref{subsec:NN}, $p_\textup{d}(t)$ is a geometric trajectory in $\mathbb{R}^n$ that satisfies the given task. More details regarding the procedure are out of scope of this paper and can be found in (\cite{lindemann2020efficient}).

\subsection{Feedback control design} \label{sec:control design}

As mentioned in Section \ref{subsec:NN}, we do not expect the neural-network controller to accomplish the given task, since the system (1) is trained on potentially different tasks and different system parameters, and (2) the neural network provides only an \textit{approximation} of a stabilizing controller; potential deviations in certain regions of the state space might lead to instability. Moreover, the neural-network controller has no error feedback with respect to the open-loop trajectory $p_{\textup{d}}$; such feedback is substantial in the stability of control systems with dynamic uncertainties. 
Therefore, this section is devoted to the design of a feedback-control policy to track the trajectory $p_\textup{d}(t)$ by using the output of the trained neural network (see Fig. \ref{fig:blck d + unicycle}a). The goal is to drive the error $e \coloneqq x - p_\textup{d}$ to zero.
We first impose an assumption on the closed-loop system trajectory that is driven by the neural network's output. 

\begin{assumption} \label{ass:bound}
	The output $u_\textup{nn}(\bar{x},t)$ of the trained neural network satisfies 
	\begin{align} \label{eq:assumption bound}
		\| f(\bar{x},t) + g(\bar{x},t)u_\textup{nn}(\bar{x},t) \| \leq d \|\bar{x}\| + B 
	\end{align}
	for positive constants $d$, $B$, for all $\bar{x} \in \mathbb{R}^{2n}$, $t\geq 0$. 
\end{assumption}

Intuitively, Assumption \ref{ass:bound} states that the neural-network controller $u_\textup{nn}(\bar{x},t)$ is able to maintain the \textit{boundedness} of the system state by the  constants $d$, $B$, which are considered to be \textit{unknown}. The assumption is motivated by the property of neural networks to approximate a continuous function arbitrarily well in a compact domain for a large enough number of neurons and layers~(\cite{cybenko1989approximation})\footnote{For simplicity, we consider that
(\ref{eq:assumption bound}) holds globally, but it can be extended to hold in a compact set.}.
Loosely speaking, since the collection of SITL tasks, which the neural network is trained with, correspond to bounded trajectories, the system states are expected to remain bounded. Since $f(\bar{x},t)$, and $g(\bar{x},t)$ are continuous in $\bar{x}$ and bounded in $t$, they are also expected to be bounded as per  (\ref{ass:bound}).
Contrary to the related works  (e.g.,~(\cite{vamvoudakis2010online,yang2020safe,cheng2007neural,fan2018robust})), however, we do not adopt approximation schemes for the system dynamics and we do not impose restrictions on the size of $d$, $B$. Moreover, Assumption \ref{ass:bound} does not imply that the neural network controller $u_\textup{nn}(\bar{x},t)$ guarantees tracking of the open-loop trajectory $p_\textup{d}$. Instead, it is merely a growth condition.
Additionally, note that inequality \ref{eq:assumption bound} does not depend specifically on any of the SITL tasks that the neural network is trained with. We exploit this property in the control design and achieve task generalization; that is, the open-loop trajectory $p_\textup{d}$ to be tracked (corresponding to the task $\varphi$) can be any of the tasks that the neural network is trained with.
 

We now define the feedback-control policy. 
Consider the adaptation variables $\hat{\ell}_1$, $\hat{\ell}_2$,  corresponding to upper bounds of $d$, $B$ in (\ref{eq:assumption bound}), with $\hat{\ell}_1(0) > 0$, $\hat{\ell}_2(0) > 0$. 
We design first a reference signal for $\dot{x}$ as 
\begin{align} \label{eq:v d}
	v_\textup{d} \coloneqq \dot{p}_\textup{d} - k_1 e,
\end{align}
that would stabilize the subsystem $\|e\|^2$, where $k_1$ is a positive control gain constant. Following the back-stepping methodology (\cite{krstic1995nonlinear}),  we define next the respective error $e_v \coloneqq \dot{x} - v_\textup{d}$ and 
design the neural-network-based adaptive control law as
\begin{subequations} \label{eq:control as}
	\begin{align}
		&u(\bar{x},\hat{\ell}_1,\hat{\ell}_2,t) = u_\textup{nn}(\bar{x},t) - k_2 e_v - \hat{\ell}_1 e_v - \hat{\ell}_2\hat{e}_v \\
		&\dot{\hat{\ell}}_1 = k_{\ell_1} \|e_v\|^2, \ \ \dot{\hat{\ell}}_2 = k_{\ell_2} \|e_v\|		
	\end{align} 
	where $k_2,  k_{\ell_1},  k_{\ell_2}$ are positive constants, and $\hat{e}_v = \frac{e_v}{\|e_v\|}$ if $e_v\neq0$, and $\hat{e}_v = 0$ if $e_v = 0$.
\end{subequations}
%
The control design is inspired by adaptive control methodologies~(\cite{krstic1995nonlinear}), where the time-varying gains $\hat{\ell}_1(t)$, $\hat{\ell}_2(t)$, adapt to the unknown dynamics and counteract the effect of $d$ and $B$ in (\ref{eq:assumption bound}) in order to ensure closed-loop stability. 
Note that the policy (\ref{eq:v d}), (\ref{eq:control as}) does not use any information on the system dynamics $f(\cdot)$, $g(\cdot)$ or the constants $B$, $d$.
The tracking of $p_\textup{d}$ is guaranteed by the following theorem, whose proof is given in Appendix \ref{app:A}.

\begin{figure}
	\begin{subfigure}[b]{0.5\textwidth}
		\centering
		\includegraphics[width=.6\textwidth]{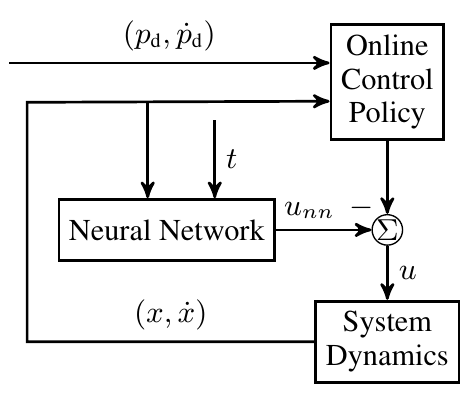}
		\caption{}
		\label{fig:block diagram}
	\end{subfigure}
	~
	\begin{subfigure}[b]{0.5\textwidth}
		\centering
		\includegraphics[width=.45\textwidth]{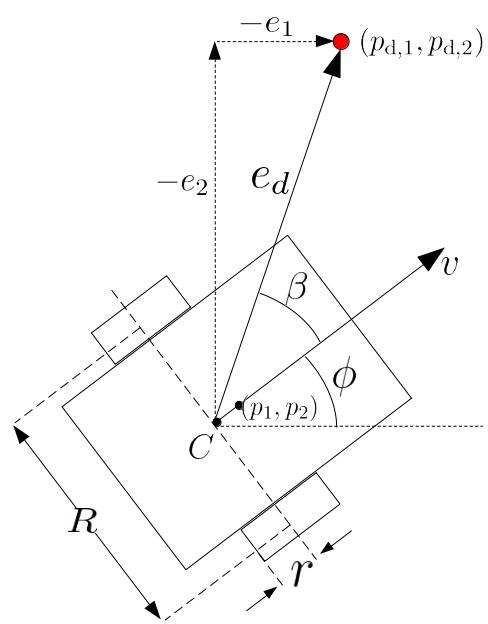}
		\caption{}
		\label{fig:unicycle}
	\end{subfigure}
	\caption{(a): Block diagram of the proposed algorithm. (b): A unicycle vehicle.}
	\label{fig:blck d + unicycle}
\end{figure}

\begin{theorem} \label{th:as}
	Let a system evolve according to (\ref{eq:dynamics}) and let an open-loop trajectory $p_\textup{d}(t)$ that satisfies a given SITL task modeled by $\varphi$. Under Assumption \ref{ass:bound}, the control algorithm (\ref{eq:control as}) guarantees $\lim_{t\to\infty}(e(t),e_v(t)) = 0$, 
	as well as the boundedness of all closed-loop signals.
\end{theorem}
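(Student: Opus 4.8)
The plan is a Lyapunov-based adaptive backstepping argument. First I would express the closed loop through the errors $e \coloneqq x - p_\textup{d}$ and $e_v \coloneqq \dot{x} - v_\textup{d}$. Differentiating $e$ and substituting (\ref{eq:v d}) gives the cascade relation $\dot{e} = -k_1 e + e_v$. Differentiating $e_v$, using $\ddot{x} = f + g u$ from (\ref{eq:dynamics}) together with the control law (\ref{eq:control as}), and noting that $\dot{v}_\textup{d} = \ddot{p}_\textup{d} + k_1^2 e - k_1 e_v$, yields $\dot{e}_v = \big(f(\bar{x},t) + g(\bar{x},t)u_\textup{nn}(\bar{x},t)\big) - g(\bar{x},t)\big(k_2 e_v + \hat{\ell}_1 e_v + \hat{\ell}_2\hat{e}_v\big) - \ddot{p}_\textup{d} - k_1^2 e + k_1 e_v$. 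The purpose of this rearrangement is that the only term carrying the unknown dynamics, $f + g u_\textup{nn}$, is isolated, while the design feedback enters pre-multiplied by the positive-definite matrix $g$.

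Next I would convert Assumption \ref{ass:bound} into an affine-in-error growth bound. Since the open-loop trajectory of Section \ref{sec:ol traj} is bounded with bounded $\dot{p}_\textup{d},\ddot{p}_\textup{d}$, we have $\|\bar{x}\| \le \alpha(\|e\| + \|e_v\|) + \beta$ for constants $\alpha,\beta$; combined with (\ref{eq:assumption bound}) and after absorbing $\ddot{p}_\textup{d}$ into the constant term, the disturbance acting on $e_v$ obeys $\|f + g u_\textup{nn} - \ddot{p}_\textup{d}\| \le \delta_1\|e\| + \delta_2\|e_v\| + \delta_0$ for unknown constants $\delta_i$. I would then take $V = \tfrac12\|e\|^2 + \tfrac12\|e_v\|^2 + \tfrac{1}{2k_{\ell_1}}\tilde{\ell}_1^2 + \tfrac{1}{2k_{\ell_2}}\tilde{\ell}_2^2$ with $\tilde{\ell}_i \coloneqq \hat{\ell}_i - \ell_i^\ast$, where the ideal constants $\ell_1^\ast,\ell_2^\ast$ are chosen, using the uniform lower bound $\underline{g} \coloneqq \inf\lambda_{\min}(g) > 0$ supplied by Assumption \ref{ass:g pd}, so that $\ell_1^\ast e_v^\top g e_v \ge \delta_2\|e_v\|^2$ and $\ell_2^\ast e_v^\top g\hat{e}_v \ge \delta_0\|e_v\|$. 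Differentiating $V$, the adaptation laws in (\ref{eq:control as}) are designed precisely to cancel the $\hat{\ell}_i$-dependent feedback against the $\tilde{\ell}_i$ terms, the cross terms $e^\top e_v$, $k_1^2 e^\top e_v$ and $\delta_1\|e\|\|e_v\|$ are split with Young's inequality, and one is left, for sufficiently large $k_1,k_2$, with $\dot{V} \le -c_1\|e\|^2 - c_2\|e_v\|^2$ for some $c_1,c_2 > 0$.

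The remaining conclusions are then standard. Because $\dot{V} \le 0$, $V$ is bounded, so $e,e_v,\tilde{\ell}_1,\tilde{\ell}_2$ — and hence $\hat{\ell}_1,\hat{\ell}_2$, $x = e + p_\textup{d}$, $\dot{x} = e_v + v_\textup{d}$ and the input $u$ — are all bounded, and solutions extend to $[0,\infty)$ with no finite escape. Integrating the inequality shows $e,e_v \in L_2$; boundedness of all signals makes $\dot{e},\dot{e}_v$ bounded, so $\|e\|^2 + \|e_v\|^2$ is uniformly continuous, and Barbalat's lemma gives $e,e_v \to 0$ as $t\to\infty$.

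The step I expect to be the main obstacle is reconciling the \emph{scalar} adaptation laws $\dot{\hat{\ell}}_1 = k_{\ell_1}\|e_v\|^2$, $\dot{\hat{\ell}}_2 = k_{\ell_2}\|e_v\|$ with the fact that the stabilizing feedback is pre-multiplied by the \emph{matrix} $g$: the Lyapunov derivative produces $-\hat{\ell}_1 e_v^\top g e_v$ from the feedback but $+\hat{\ell}_1\|e_v\|^2$ from the adaptation, and these cancel cleanly only when $g = I$. Pushing the argument through for a general positive-definite $g$ is the crux, and is where Assumption \ref{ass:g pd} must be used quantitatively — either by carrying a uniform $\underline{g} > 0$ (equivalently bounded $\|g^{-1}\|$) and absorbing the residual $\hat{\ell}_i\, e_v^\top(I - g)e_v$-type quantities into the negative feedback, or by weighting the $e_v$-term of $V$ by $g^{-1}$, which trades the mismatch for a $\dot{g}$ term that then requires its own bound. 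A secondary, purely technical point is the non-smoothness of $\hat{e}_v = e_v/\|e_v\|$ at $e_v = 0$; since $-\hat{\ell}_2 e_v^\top g\hat{e}_v = -\hat{\ell}_2\, e_v^\top g e_v/\|e_v\|$ is continuous and vanishes as $e_v \to 0$, this is handled either directly or within a nonsmooth (Filippov) Lyapunov framework.
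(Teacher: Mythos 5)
Your outline---the error dynamics, the conversion of Assumption \ref{ass:bound} into a bound affine in $(\|e\|,\|e_v\|)$, a Lyapunov function quadratic in $(e,e_v,\tilde{\ell}_1,\tilde{\ell}_2)$, and a Barbalat/nonsmooth ending---is the same skeleton as the paper's proof. But the step you yourself single out as the crux is left genuinely open, and neither of your two candidate repairs closes it. Repair (a) fails because the mismatch term $\hat{\ell}_1\, e_v^\top(I-g)e_v$ (and likewise $\hat{\ell}_2(\|e_v\| - e_v^\top g\hat{e}_v)$) carries the coefficient $\hat{\ell}_1(t)$, which is monotonically nondecreasing and has no a priori bound, while the negative feedback you propose to absorb it into, $-k_2\, e_v^\top g e_v \le -k_2\underline{g}\|e_v\|^2$, has a fixed coefficient; no fixed $k_2$ can dominate an unbounded adaptive gain when $\underline{g}<1$. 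Repair (b), weighting the $e_v$-term of $V$ by the matrix $g^{-1}$, generates $\tfrac{d}{dt}(g^{-1})$ terms, and since $g$ is completely unknown no bound on $\dot{g}$ is available---so that route is blocked too, as you suspected.

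The paper's resolution is a third option: weight the $e_v$-part of $V$ by the \emph{constant} scalar $1/\underline{g}$ (the reciprocal of the uniform minimum eigenvalue of $g$), i.e., $V = \tfrac{1}{2}\|e\|^2 + \tfrac{1}{2\underline{g}}\|e_v\|^2 + \sum_{i\in\{1,2\}}\tfrac{1}{2k_{\ell_i}}\tilde{\ell}_i^2$, and replace cancellation by \emph{domination}. Since $\hat{\ell}_1(0)>0$, $\hat{\ell}_2(0)>0$ and the adaptation laws are nonnegative, $\hat{\ell}_1(t)>0$ and $\hat{\ell}_2(t)>0$ for all $t$, so positive definiteness of $g$ gives $-\tfrac{1}{\underline{g}}(k_2+\hat{\ell}_1)e_v^\top g\, e_v \le -(k_2+\hat{\ell}_1)\|e_v\|^2$ and $-\tfrac{1}{\underline{g}}\hat{\ell}_2\, e_v^\top g\,\hat{e}_v \le -\hat{\ell}_2\|e_v\|$; the $+\hat{\ell}_i$ pieces of these bounds then cancel \emph{exactly} against the $\tilde{\ell}_i = \hat{\ell}_i - \ell_i$ terms produced by the adaptation, leaving $-\ell_1\|e_v\|^2 - \ell_2\|e_v\|$ free to absorb the growth-bound terms (the bound of Assumption \ref{ass:bound} is itself rescaled by $1/\underline{g}$ into new unknown constants $d_1,d_2,D$, which is harmless because they are targets of adaptation, not of gain selection). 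Relatedly, your conclusion ``for sufficiently large $k_1,k_2$'' is not admissible: $\delta_0,\delta_1,\delta_2$ are unknown, so such gains cannot be chosen, and the theorem asserts convergence for \emph{arbitrary} positive gains. The paper needs no gain condition: the Young's-inequality parameter $\alpha$ is a free analysis parameter chosen so that $d_1\alpha/2 < k_1$ (possible for any $k_1>0$), and the resulting price $\tfrac{d_1}{2\alpha}+d_2$ is pushed into the ideal constant $\ell_1$ that $\hat{\ell}_1$ adapts to---which is precisely what the adaptation mechanism is for.
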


Note that, contrary to works in the related literature (e.g.,~(\cite{verginis2020asymptotic,bechlioulis2014low})), we do not impose reciprocal terms in the control input that grow unbounded in order to guarantee closed-loop stability. The resulting controller is essentially a simple linear feedback on $(e(t),e_v(t))$ with time-varying adaptive control gains, accompanied by the neural network output that ensures the growth condition (\ref{eq:assumption bound}). The positive gains $k_1$, $k_2$, $k_{\ell_1}$, $k_{\ell_2}$ do not affect the   stability results of Theorem \ref{th:as}, but might affect the evolution of the closed-loop system; e.g., larger gains lead to faster convergence but possibly larger control inputs.

\textbf{Extension to non-holonomic unicycle dynamics}

As mentioned in Section \ref{sec:intro}, the dynamics \ref{eq:dynamics} do not represent all kinds of systems, with one particular example being when non-holonomic constraints are present. In such cases, the control law design (\ref{eq:control as}) and Theorem \ref{th:as} no longer hold. In this section, we extend the control policy to account for unicycle vehicles subject to first-order non-holonomic constraints. 
More specifically, we consider the dynamics
\begin{subequations} \label{eq:dynamics unicycle}
	\begin{align} 
		&\dot{p}_1 = v \cos \phi, \hspace{2mm} 
		\dot{p}_2 = v \sin \phi, \hspace{2mm} 
		\dot{\phi} = \omega \\
		&\hspace{12mm} M \ddot{\theta} = u + f_\theta(\bar{x},t) 
	\end{align}
\end{subequations}
where $x = [p_1,p_2,\phi]^\top\in \mathbb{R}^3$ are the unicycle's position and orientation, $(v,\omega)$ are its linear and angular velocity (see Fig. \ref{fig:blck d + unicycle}b), $\theta\coloneqq[\theta_R,\theta_L]^\top \in \mathbb{R}^2$ are its wheel's angular positions, and $u = [u_R,u_L]^\top \in \mathbb{R}^2$ are the wheel's torques, representing the control input. 
The unicycle vehicle is subject to the non-holonomic constraint $\dot{p}_1\sin\phi - \dot{p}_2\cos\phi = 0$,
which implies that the vehicle cannot move laterally. 
Additionally, $M\in\mathbb{R}^{2\times2}$ is the vehicle's inertia matrix, which is symmetric and positive definite, and $f_\theta(\cdot)$ is a function representing friction and external disturbances.
 The velocities satisfy the relations $v = \frac{r}{2}(\dot{\theta}_R + \dot{\theta}_L)$, $\omega = \frac{r}{2R}(\dot{\theta}_R - \dot{\theta}_L))$, where $r$ and $R$ are the wheels' radius and axle length, respectively. The terms $r$, $R$, $M$, and $f_\theta(\cdot)$ are considered to be \textit{completely unknown}. As before, the goal is for the vehicle's position $p \coloneqq [p_1,p_2]^\top$ to track the desired trajectory $p_\textup{d} = [p_{\textup{d},1},p_{\textup{d},2}]^\top \in \mathbb{R}^2$, which is output from the procedure described in Sec. \ref{sec:ol traj}. Towards that end, we define the error variables $e_1  \coloneqq p_1 - p_{\textup{d},1}$, $e_2 \coloneqq p_2 - p_{\textup{d},2}$, $e_d \coloneqq \|p-p_\textup{d}\|$, as well as the angle $\beta$ measured from the the longitudinal axis of the vehicle, i.e., the unicycle's direction vector $[\cos\phi,\sin\phi]$, to the error vector $-[e_1,e_2]$ (see Fig. \ref{fig:unicycle}). The angle $\beta$ can be derived by using the cross product between the aforementioned vectors, i.e., $	e_d\sin(\beta) = [\cos\phi,\sin\phi]\times[-e_1,-e_2] = {e_1}\sin\phi - {e_2}\cos\phi$.
The purpose of the control design, illustrated next, is to drive $e_d$ and $\beta$ to zero. By differentiating the latter and using (\ref{eq:dynamics unicycle}) as well as the relations $e_1 = -e_d \cos(\phi+\beta)$, $e_2 = -e_d \sin(\phi + \beta)$ (see Fig. \ref{fig:unicycle}), we derive 
\begin{subequations} \label{eq:ed beta dot}	
	\begin{align}
		\dot{e}_d &= -v\cos\beta + \dot{p}_{\textup{d},1}\cos(\phi+\beta) + \dot{p}_{\textup{d},2}\sin(\phi+\beta)\\
		\dot{\beta} &= -\omega + \frac{\sin\beta}{e_d}v - \frac{\dot{p}_{\textup{d},1}}{e_d}\sin(\phi+\beta) + \frac{\dot{p}_{\textup{d},2}}{e_d}\cos(\phi+\beta)
	\end{align}
\end{subequations}
In view of (\ref{eq:ed beta dot}),
we set reference signals for the vehicle's velocity as 
\begin{subequations} \label{eq:v_des unicycle}
\begin{align}
	v_\textup{d} \coloneqq& \frac{1}{\cos(\beta)}(\dot{p}_{\textup{d},1}\cos(\beta+\phi) + \dot{p}_{\textup{d},2}\sin(\beta+\phi) + k_d e_d) \\
	\omega_\textup{d} \coloneqq& -\frac{\sin(\phi) \dot{p}_{\textup{d},1} }{\cos(\beta)e_d} + \frac{\cos(\phi)\dot{p}_{\textup{d},2}}{\cos(\beta)e_d} + k_d\tan\beta  + k_\beta \beta
\end{align} 
\end{subequations}
where $k_d$, $k_\beta$ are positive gains, aiming to create exponentially stable subsystems via the terms $k_de_d$ and $k_\beta \beta$. We define next the respective velocity errors $e_v \coloneqq v - v_\textup{d}$, $e_\omega \coloneqq \omega - \omega_\textup{d}$
and design the adaptive and neural-network-based control input as $u(\bar{x},\hat{d},t) \coloneqq [\frac{u_S+u_D}{2},\frac{u_S-u_D}{2}]^\top + u_{\textup{nn}}(\bar{x},t)$, with 
\begin{subequations} \label{eq:control as unicycle}
\begin{align}
	u_S &\coloneqq \hat{\ell}_{v}\dot{v}_{\textup{d}} - (k_v + \hat{\ell}_1)e_v - \hat{\ell}_2\hat{e}_v + e_d\cos\beta - \beta\frac{\sin\beta}{e_d} \label{eq:control as unicycle u_s}\\
	u_D &\coloneqq \hat{\ell}_{\omega}\dot{\omega}_{\textup{d}} - (k_\omega + \hat{\ell}_1)e_\omega- \hat{\ell}_2\hat{e}_\omega  + \beta \label{eq:control as unicycle u_D}
\end{align}	
\begin{align}
	\dot{\hat{\ell}}_{v} &\coloneqq -k_{v}e_v\dot{v}_\textup{d}, \hspace{10mm}
	\dot{\hat{\ell}}_{\omega} \coloneqq -k_{\omega}e_{\omega}\dot{\omega}_\textup{d} \\	
	\dot{\hat{\ell}}_1 &\coloneqq k_1(e_v^2 + e_\omega^2) \hspace{6mm} \dot{\hat{\ell}}_2 \coloneqq k_2(|e_v|+|e_\omega|)
\end{align}
\end{subequations}
where $\hat{\ell}_{v}$, $\hat{\ell}_{\omega}$, $\hat{\ell}_i$ are adaptation variables (similar to (\ref{eq:control as})), with $\hat{\ell}_v(0)>0$, $\hat{\ell}_\omega(0)>0$ and $k_{v}$, $k_{\omega}$, $k_i$, are positive gains, $i\in\{1,2\}$;  $\hat{e}_a$, with $a\in\{v,\omega\}$, is defined as $\hat{e}_a = \frac{e_a}{|e_a|}$ if $e_a\neq 0$ and $\hat{e}_a = 0$ otherwise. 
We now re-state Assumption \ref{eq:assumption bound} to apply for the unicycle analysis as follows. 
\begin{assumption} \label{ass:bound unicycle} 
	The output $u_\textup{nn}(\bar{x},t)$ of the trained neural network satisfies $		\|u_\textup{nn}(\bar{x},t) + f_\theta(\bar{x},t) \|  \leq  d\|\bar{x}\| + B$,
for positive, unknown constants $d$, $B$.
\end{assumption}
Similar to assumption \ref{ass:bound}, assumption \ref{ass:bound unicycle} is merely a growth-boundedness condition by the unknown constants $d$ and $B$. The stability of the proposed scheme is provided in the next corollary, whose proof is found in Appendix \ref{app:A}. 
\begin{corollary} \label{th:as unicycle}
	Let the unicycle system (\ref{eq:dynamics unicycle}) and let an open-loop trajectory $p_\textup{d}(t)$ that satisfies a given SITL task modeled by $\varphi$. Assume that $\beta(t)\in(-\bar{\beta},\bar{\beta})$, ${|\dot{p}_{\textup{d},1} \sin\phi - \dot{p}_{\textup{d,2}}\cos\phi |} < e_d\alpha_1$, ${\sin\beta} < e_d\alpha_2$  for positive constants $\bar{\beta} < \frac{\pi}{2}$, $\alpha_1$, $\alpha_2$ and all $t\geq 0$. Under Assumption \ref{ass:bound unicycle}, the control policy (\ref{eq:control as unicycle}) guarantees 
	$\lim_{t\to\infty}(e_d(t),\beta(t),e_v(t),e_\omega(t)) = 0$,
	and the boundedness of all closed-loop signals.
\end{corollary}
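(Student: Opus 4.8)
The plan is to prove Corollary~\ref{th:as unicycle} by a backstepping-plus-adaptive-Lyapunov argument that mirrors the structure behind Theorem~\ref{th:as}, closed by an invocation of Barbalat's lemma. First I would reduce the actuated dynamics to the two velocity channels: using $v=\frac{r}{2}(\dot{\theta}_R+\dot{\theta}_L)$, $\omega=\frac{r}{2R}(\dot{\theta}_R-\dot{\theta}_L)$ together with $M\ddot{\theta}=u+f_\theta$ and the decomposition $u=[\tfrac{u_S+u_D}{2},\tfrac{u_S-u_D}{2}]^\top+u_\textup{nn}$, the structure of $M$ and the wheel transmission collapses the wheel dynamics into two scalar equations $\lambda_v\dot{v}=u_S+F_v$ and $\lambda_\omega\dot{\omega}=u_D+F_\omega$, where $\lambda_v,\lambda_\omega>0$ are unknown constants and $F_v,F_\omega$ are the scalar projections of $u_\textup{nn}+f_\theta$. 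By Assumption~\ref{ass:bound unicycle} both $|F_v|$ and $|F_\omega|$ are bounded by a constant multiple of $d\|\bar{x}\|+B$. The adaptation variables $\hat{\ell}_v,\hat{\ell}_\omega$ are interpreted as estimates of $\lambda_v,\lambda_\omega$.

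Next I would verify that the reference velocities \eqref{eq:v_des unicycle} render the kinematic subsystem exponentially stable under perfect velocity tracking. Substituting $v=v_\textup{d}$, $\omega=\omega_\textup{d}$ into \eqref{eq:ed beta dot} and using the identities $\sin\beta\cos(\beta+\phi)-\cos\beta\sin(\beta+\phi)=-\sin\phi$ and $\cos\beta\cos(\beta+\phi)+\sin\beta\sin(\beta+\phi)=\cos\phi$, all $\dot{p}_\textup{d}$-dependent terms cancel and leave $\dot{e}_d=-k_de_d$, $\dot{\beta}=-k_\beta\beta$. Re-introducing the velocity errors gives the closed-loop kinematics $\dot{e}_d=-k_de_d-e_v\cos\beta$ and $\dot{\beta}=-k_\beta\beta-e_\omega+\frac{\sin\beta}{e_d}e_v$. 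I would then form the composite Lyapunov function
\[ V=\tfrac12 e_d^2+\tfrac12\beta^2+\tfrac{\lambda_v}{2}e_v^2+\tfrac{\lambda_\omega}{2}e_\omega^2+\tfrac{1}{2k_v}\tilde{\ell}_v^2+\tfrac{1}{2k_\omega}\tilde{\ell}_\omega^2+\tfrac{1}{2k_1}\tilde{\ell}_1^2+\tfrac{1}{2k_2}\tilde{\ell}_2^2, \]
with $\tilde{\ell}_v=\hat{\ell}_v-\lambda_v$, $\tilde{\ell}_\omega=\hat{\ell}_\omega-\lambda_\omega$, and $\tilde{\ell}_1=\hat{\ell}_1-\ell_1^{\ast}$, $\tilde{\ell}_2=\hat{\ell}_2-\ell_2^{\ast}$ for target constants $\ell_1^{\ast},\ell_2^{\ast}$ fixed later. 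The design feature to exploit is that the terms $e_d\cos\beta-\beta\frac{\sin\beta}{e_d}$ in $u_S$ and $\beta$ in $u_D$ cancel exactly the indefinite kinematic cross terms $-e_de_v\cos\beta-\beta e_\omega+\beta\frac{\sin\beta}{e_d}e_v$ in $\dot{V}$, while the laws for $\hat{\ell}_v,\hat{\ell}_\omega$ cancel the $\tilde{\ell}_v e_v\dot{v}_\textup{d}$ and $\tilde{\ell}_\omega e_\omega\dot{\omega}_\textup{d}$ terms.

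After these cancellations $\dot{V}$ reduces to $-k_de_d^2-k_\beta\beta^2-(k_v+\hat{\ell}_1)e_v^2-(k_\omega+\hat{\ell}_1)e_\omega^2-\hat{\ell}_2(|e_v|+|e_\omega|)$ plus the residual disturbance $e_vF_v+e_\omega F_\omega$ and the adaptation contributions $\tilde{\ell}_1(e_v^2+e_\omega^2)+\tilde{\ell}_2(|e_v|+|e_\omega|)$; the latter collapse the $\hat{\ell}_1,\hat{\ell}_2$ terms to $-(k_v+\ell_1^{\ast})e_v^2-(k_\omega+\ell_1^{\ast})e_\omega^2-\ell_2^{\ast}(|e_v|+|e_\omega|)$. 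I would then bound the disturbance via Assumption~\ref{ass:bound unicycle}: invoking $\beta\in(-\bar{\beta},\bar{\beta})$, $|\dot{p}_{\textup{d},1}\sin\phi-\dot{p}_{\textup{d},2}\cos\phi|<e_d\alpha_1$ and $\sin\beta<e_d\alpha_2$ keeps $\cos\beta\ge\cos\bar{\beta}>0$ and renders the singular $1/e_d$ terms in $v_\textup{d},\omega_\textup{d}$ bounded, so that $v_\textup{d},\omega_\textup{d}$ -- and hence $\|\bar{x}\|$ -- are dominated by an affine function of $(e_d,|e_v|,|e_\omega|)$ plus a constant. Young's inequality then splits $e_vF_v+e_\omega F_\omega$ into a quadratic part absorbed by $\ell_1^{\ast}(e_v^2+e_\omega^2)$ and $k_de_d^2$, and a linear part absorbed by $\ell_2^{\ast}(|e_v|+|e_\omega|)$; since $\ell_1^{\ast},\ell_2^{\ast}$ and the Young parameter are free analysis quantities that never enter the controller \eqref{eq:control as unicycle}, they can be chosen large enough -- for the unknown $d,B$ -- to yield $\dot{V}\le -k_de_d^2-k_\beta\beta^2-c(e_v^2+e_\omega^2)\le 0$.

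Finally, $\dot{V}\le 0$ gives $V\in L_\infty$, hence boundedness of $e_d,\beta,e_v,e_\omega$ and of all adaptation variables, while integrating $\dot{V}$ yields $e_d,\beta,e_v,e_\omega\in L_2$. Establishing that $\dot{e}_d,\dot{\beta},\dot{e}_v,\dot{e}_\omega$ are bounded (so the errors are uniformly continuous) then lets Barbalat's lemma conclude $(e_d,\beta,e_v,e_\omega)\to 0$. I expect the main obstacle to be precisely the treatment of the singular $1/e_d$ terms (and of $1/e_d^2$ in $\dot{v}_\textup{d},\dot{\omega}_\textup{d}$): showing that the control law and the error derivatives remain bounded as $e_d\to 0$ hinges on the geometric hypotheses forcing $\sin\beta$ and the misalignment $\dot{p}_{\textup{d},1}\sin\phi-\dot{p}_{\textup{d},2}\cos\phi$ to vanish commensurately with $e_d$, and on controlling the orientation-dependent state $\|\bar{x}\|$ that enters Assumption~\ref{ass:bound unicycle} through the error coordinates. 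A secondary subtlety is the non-smoothness of $\hat{e}_v,\hat{e}_\omega$, which I would handle by using $e_a\hat{e}_a=|e_a|$ for $e_a\neq 0$ and treating $e_a=0$ through a Filippov/Clarke argument.
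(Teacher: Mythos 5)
Your proposal is correct and follows essentially the same route as the paper's proof: the same sum/difference collapse of the wheel dynamics into two scalar velocity channels with unknown positive coefficients estimated by $\hat{\ell}_v,\hat{\ell}_\omega$, the same composite Lyapunov function with the same exact cancellations, the same rewriting of Assumption \ref{ass:bound unicycle} in error coordinates via the geometric hypotheses, the same Young-type splitting absorbed by analysis-only constants, and the same Filippov treatment of $\hat{e}_v,\hat{e}_\omega$. The only cosmetic difference is that you close with an explicit $L_2$-plus-Barbalat argument, whereas the paper invokes its nonsmooth LaSalle--Yoshizawa result (Corollary \ref{th:LaSalle}), which packages that same limiting step internally.
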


The assumptions ${|\dot{p}_{\textup{d},1} \sin\phi - \dot{p}_{\textup{d,2}}\cos\phi |} < e_d\alpha_1$, ${\sin\beta} < e_d\alpha_2$ are imposed to avoid the singularity of $e_d=0$; note that $\beta$ and $\omega_\textup{d}$ are not defined in that case. Intuitively, they imply that $e_d$ will not be driven to zero faster than $\beta$ or $\dot{p}_{\textup{d},1} \sin\phi - \dot{p}_{\textup{d,2}}\cos\phi$; the latter becomes zero when the vehicle's velocity vector $v$ aligns with the desired one $\dot{p}_\textup{d}$. In the experiments, we tune the control gains according to $k_\beta \sim 10k_d$ in order to satisfy these assumptions.

\section{Numerical Experiments} \label{sec:exp res}

This section is devoted to a series of numerical experiments. More details can be found in Appendix \ref{app:B}.
We first test the proposed algorithm on a $6$-dof UR5 robotic manipulator with dynamics  
\begin{wrapfigure}[13]{r}{0pt}
	\centering
	\includegraphics[width=0.3\textwidth]{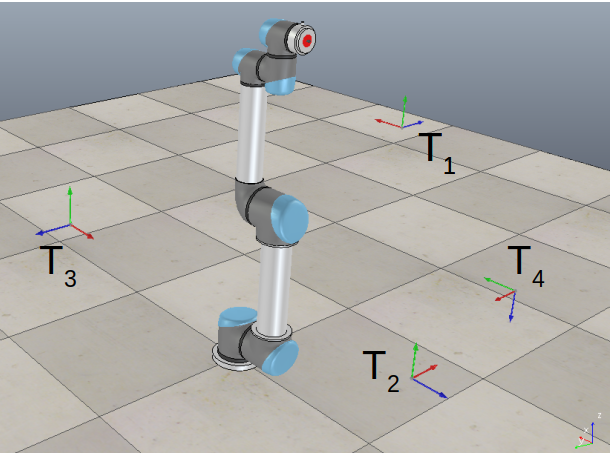}	
	\caption{A UR5 robot in a workspace with four points of interest $T_i$, $i\in\{1,\dots,4\}$.}
	\label{fig:ur5}
\end{wrapfigure}
\small
	\begin{align}\label{eq:exp dynamics}		
		\hspace{-2mm}\ddot{x} = B(x)^{-1}\left(u - C(\bar{x})\dot{x} - g(x) + d(\bar{x},t) \right)
	\end{align}
\normalsize
where $x, \dot{x} \in \mathbb{R}^6$ are the vectors of robot joint angles and angular velocities, respectively;
 $B(x)\in\mathbb{R}^{6\times 6}$ is the positive definite inertia matrix, $C(\bar{x})\in\mathbb{R}^{6\times 6}$ is the Coriolis matrix, $g(x)\in\mathbb{R}^6$ is the gravity vector, and $d(\bar{x},t)\in\mathbb{R}^6$ is a vector of friction terms and exogenous time-varying disturbances. 

The workspace consists of four points of interest $T_1$, $T_2$, $T_3$, $T_4$ 
 (end-effector position and Euler-angle orientation), as depicted in Fig. \ref{fig:ur5}, which correspond to the joint-angle vectors $c_1$, $c_2$, $c_3$, $c_4$. More information is provided in Appendix \ref{app:A}.
We consider a nominal SITL task of the form $\phi =  \bigwedge_{i\in\{1,\dots,4\}} G_{[0,\infty)} F_{I_i}
( \| x_1  -  c_i \| \leq 0.1) $, i.e., visit of $x_1$ to $c_i\in\mathbb{R}^6$ (within the radius $0.1$) infinitely often within the time intervals dictated by $I_i$, for $i\in\{1,\dots,4\}$.

We create 150 problem instances by varying the positions of $c_i$, the time intervals $I_i$, the dynamic parameters of the robot (masses and moments of inertia of the robot's links and actuators), the friction and disturbance term $d(\cdot)$, the initial position and velocity of the robot, and the sequence of visits to the points $c_i$, as dictated by $\phi$, i.e., one instance might correspond to the visit sequence $( (x(0),0) \to (c_1,t_{1_1}) \to (c_2,t_{1_2}) \to (c_3,t_{1_3}) \to (c_4,t_{1_4})$, and another to $( (x(0),0) \to (c_3,t_{1_3}) \to (c_1,t_{1_1}) \to (c_4,t_{1_4}) \to (c_2,t_{1_2})$. We separate the aforementioned 150 problem instances into 100 training instances and 50 test instances. We generate trajectories using the 100 training instances from system runs that satisfy different variations of one cycle of $\phi$ (i.e., one visit to each point). Each trajectory consists of 500 points and is generated using a nominal model-based controller. We use these trajectories to train a neural network and we test the control policy (\ref{eq:control as}) in the $50$ test instances. We also compare 
our algorithm with the non-adaptive controller $u_c(\bar{x},t) = u_\textup{nn}(x,t) - k_1 e - k_2 \dot{e}$, as well as with a modified version $u_d(\bar{x},t)$ of (\ref{eq:control as}) that does not employ the neural network (i.e., the term $u_\textup{nn}(\bar{x},t)$).
The comparison results are depicted in Fig. \ref{fig:errrors_comparison}, which depicts the mean and standard deviation of the signal $\|e(t)\|+\|\dot{e}(t)\|$ for the 
$50$ instances and 20 seconds.  
It is clear from the figure that the proposed algorithm performs better than the non-adaptive and no-neural-network policies both in terms of convergence speed and steady-state error. It is worth noting that the non-adaptive policy results on average in unstable closed-loop system. 

We next test the proposed algorithm, following a similar procedure, on a unicycle robot with dynamics of the form (\ref{eq:dynamics unicycle}). 
Fig. \ref{fig:unicycle errors} depicts the mean and standard deviation of the errors $e_d(t)$, $e_\beta(t)$ for 50 test instances. We note that the performance of the no-neural-network control policy is much more similar to the proposed one than in the UR5 case. This can be attributed to (1) the lack of gravitational terms in the unicycle dynamics, which often lead to instability, and (2) the chosen control gains of the no-neural-network policy, which are sufficiently large to counteract the effect of the dynamic uncertainties.

\begin{figure}
	\centering
	\includegraphics[trim={0cm 0cm 0cm 0cm},width=.7\textwidth]{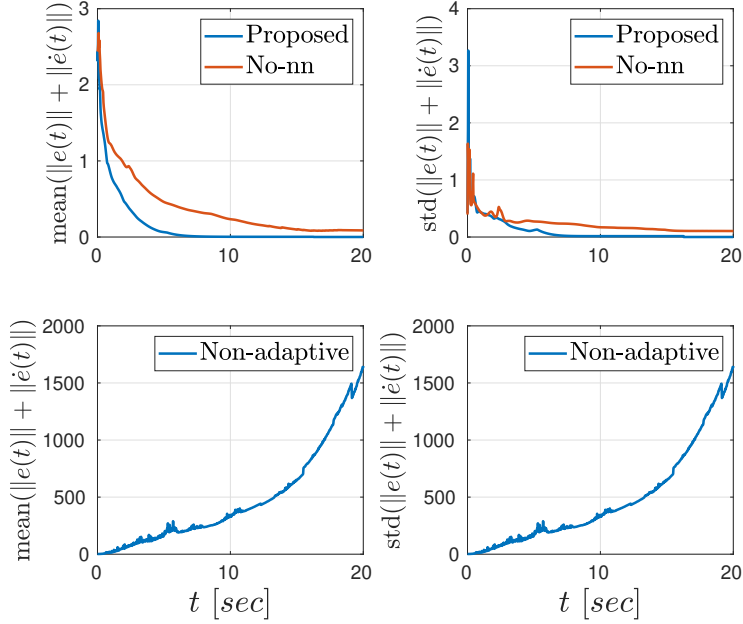}		
	\caption{Mean (left) and standard deviation (right) of $\|e(t)\|+\|\dot{e}(t)\|$ for the proposed, non-adaptive, and no-neural-network control policies.}
	\label{fig:errrors_comparison}
\end{figure}

Finally, we compare the performance of the proposed control policy with the reported data of (\cite{wang2019benchmarking}) on the benchmarking enivronment of the \textit{pendulum}, 
where a single-link mechanical structure aims to reach the upright position\footnote{For the sake of comparison, we do not consider an SITL task here.}. 
Following (\cite{wang2019benchmarking}), the reward and costs are $r_{pend}(t) = -\cos q(t) - 0.1\sin q(t) - 0.1\dot{q}(t) - 0.001u(t)^2$ and $J_{pend} = \sum_{t=1}^H \gamma^t r_{pend}(t)$, respectively. 
Similar to the previous cases and in contrast to \cite{wang2019benchmarking}, we generate 150 instances by varying the system parameters (pendulum length and mass), the external disturbances, and the initial conditions. 
\begin{wrapfigure}[18]{r}{0pt}
	\centering
	\includegraphics[trim={0cm 0cm 0cm 0cm},width=.4\textwidth]{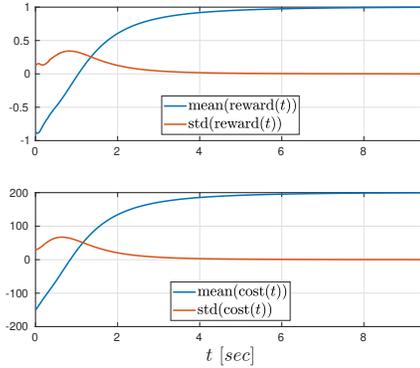}
	\caption{Mean and standard deviation of the reward and cost for the pendulum environment.}
	\label{fig:pendulum rew and costs}
\end{wrapfigure}
We generate 100 trajectories, consisting of 500 points each, for the 100 training instances, by employing a nominal controller, and we use them train a neural network. Moreover, in order to guarantee the feasibility of the proposed algorithm (\ref{eq:control as}) we consider larger control-action bounds than in (\cite{wang2019benchmarking}). The original bounds render these systems under-actuated, which is not included in the considered class of systems (\ref{eq:dynamics}) and consist part of our future work. It should be noted that such larger bounds affect negatively the acquired rewards. We test the control policy (\ref{eq:control as}) on the $50$ test instances, for 5000 steps each, corresponding to $10$ seconds. We set $H=200$ and $\gamma=1$ (\cite{wang2019benchmarking}). Fig. \ref{fig:pendulum rew and costs} depicts the mean and standard deviation of the time-varying reward and cost functions, illustrating successful regulation to the upright position. 
After 5000 steps, we obtain a mean reward and cost of $0.99$ and $200$, respectively, showing better performance than the reported cost of $180$ in (\cite{wang2019benchmarking}). Moreover, the proposed control algorithm achieves this performance without resorting to exhaustive exploration of the state-space, which is the case in the reinforcement-learning algorithms used in (\cite{wang2019benchmarking}). This is a very important property in practical engineering systems, where safety is of paramount significance and certain areas of the state space must be avoided. 
   
\begin{figure}
	\centering
	\includegraphics[trim={0cm 0cm 0cm 0cm},width=.8\textwidth]{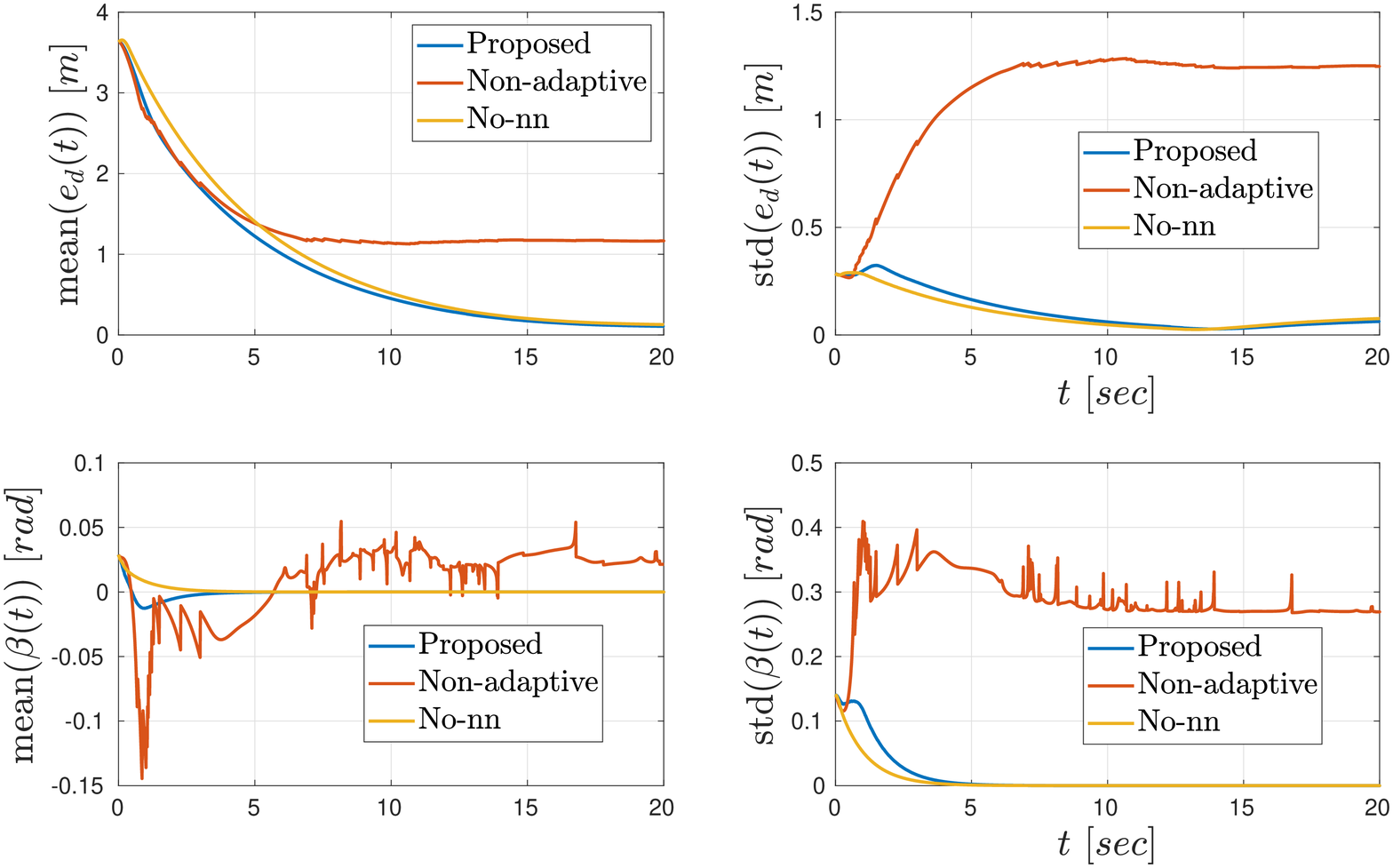}
	\caption{Mean (left) and standard deviation (right) of $e_d(t)$, $\beta(t)$ for the proposed, non-adaptive, and no-neural-network control policies.}
	\label{fig:unicycle errors}
\end{figure}
   


\section{Discussion and Limitations}
As shown in the experimental results, the control algorithm is able to accomplish tasks
which were not considered when generating the training data. Similarly, the trajectories used in the training data were generated using systems with different dynamical parameters, and not specifically the ones used in the tests. The aforementioned attributes signify the ability of the proposed algorithm to generalize to different tasks and systems with different parameters. Nevertheless, it should be noted that the control algorithm (5) guarantees asymptotic stability properties, without characterizing the
system’s transient state. Hence, task specifications that require fast system response (e.g., a robot needs to travel a long distance in a short time horizon) would possibly necessitate large control gains in order to achieve fast convergence to the derived open-loop trajectory. The
derivation of transient-state bounds and their relation with the given task specification (as for instance
in (\cite{verginis2018timed})) is left for future work. Moreover, the proposed control policy is currently limited by systems satisfying Assumption \ref{ass:g pd} and is not able to take into account under-actuated systems (e.g., the acrobot or cart-pole system); such systems consist part of our future work. Finally, the discontinuities of (\ref{eq:control as}), (\ref{eq:control as unicycle}) might be problematic and create
chattering when implemented in real actuators. A continuous approximation that has shown to yield
satisfying performance is the boundary-layer technique (\cite{slotine1991applied}).

\section{Conclusion and Future Work}

We develop a novel control algorithm for the control of robotic systems with unknown nonlinear
dynamics subject to task specifications expressed as SITL constraints. The algorithm integrates
neural network-based learning and adaptive control. We provide formal guarantees and perform
extensive numerical experiments. Future directions will focus on relaxing the considered assumptions and extending the proposed methodology
to underactuated systems.

\section{Reproducibility statement}
We provide the proofs of the theoretical results, i.e., Theorem \ref{th:as} and Corollary \ref{th:as unicycle}, in Appendix \ref{app:A}. Additionally, we elaborate on the required assumptions (Assumptions \ref{ass:g pd}, \ref{ass:bound}, \ref{ass:bound unicycle}) throughout the text - see pages 3, 5, and 7. 
Finally, we provide implementation details, instructions, and the respective code, required to reproduce the results, in Appendix \ref{app:B} and the supplementary material.

\bibliography{bibliography}
\bibliographystyle{iclr2022_conference}

\appendix
\section{Appendix} \label{app:A}

We provide here the proof of Theorem \ref{th:as} and Corollary \ref{th:as unicycle}. We first give some preliminary notation and background on systems with discontinuous dynamics.

\subsection*{Notation}
Given a function $f:\mathbb{R}^n \to \mathbb{R}^k$, its {Filippov regularization is defined as~(\cite{paden1987calculus})}
\begin{equation} \label{eq:Filippov regular.}
	\mathsf{K}[f](x) \coloneqq \bigcap_{\delta > 0}\bigcap_{\mu(\bar{N})=0} \overline{\text{co}}(f(\mathcal{B}(x,\delta) \backslash \bar{N}),t),
\end{equation}
where $\bigcap_{\mu(\bar{N})=0}$ is the intersection over all sets $\bar{N}$ of Lebesgue measure zero, $\overline{\text{co}}(E)$ is the closure of the convex hull $\text{co}(E)$ of the set $E$,  and $\mathcal{B}(x,\delta)$ is the ball with radius $\delta$ centered at $x$. 

\subsection*{Nonsmooth Analysis} \label{subsec:Nonsmooth}

Consider the following differential equation with a discontinuous
right-hand side:
\begin{equation} \label{eq:discont diff eq}
	\dot{x} = f(x,t),
\end{equation}
where $f:\mathcal{D} \times [t_0,\infty) \to \mathbb{R}^n$, $\mathcal{D}\subset \mathbb{R}^n$, is Lebesgue measurable and locally essentially bounded.
\begin{definition}[Def. $1$ of~(\cite{fischer2013lasalle})] \label{def:Filipp sol}	
	A function $x:[t_0,t_1)\to\mathbb{R}^n$, with $t_1 > t_0$, is called a Filippov solution of (\ref{eq:discont diff eq}) on $[t_0,t_1)$ if $x(t)$ is absolutely continuous and if, for almost all $t\in[t_0,t_1)$, it satisfies $\dot{x} \in \mathsf{K}[f](x,t)$, where $\mathsf{K}[f](x,t)$ is the Filippov regularization of $f(x,t)$. 
\end{definition}
\begin{lemma}[Lemma $1$ of~(\cite{fischer2013lasalle})] \label{lem:Chain rule}
	Let $x(t)$ be a Filippov solution of (\ref{eq:discont diff eq}) and $V:\mathcal{D}\times [t_0,t_1)\to\mathbb{R}$ be a locally Lipschitz, regular function\footnote{See~(\cite{fischer2013lasalle}) for a definition of regular functions.}. Then $V(x(t),t)$ is absolutely continuous, $\dot{V}(x(t),t)=\frac{\partial}{\partial t}V(x(t),t)$ exists almost everywhere (a.e.), i.e., for almost all $t\in [t_0,t_1)$, and $\dot{V}(x(t),t) \overset{\text{a.e}}{\in} \dot{\widetilde{V}}(x(t),t)$, where
	\begin{equation*}
		\dot{\widetilde{V}} \coloneqq \bigcap_{\xi\in\partial V(x,t)} \xi^\top \begin{bmatrix}
			\mathsf{K}[f](x,t) \\ 1
		\end{bmatrix},
	\end{equation*} and $\partial V(x,t)$ is Clarke's generalized gradient at $(x,t)$~(\cite{fischer2013lasalle}). 
\end{lemma}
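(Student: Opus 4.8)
The plan is to prove the three assertions of the lemma in the order stated, treating the chain-rule inclusion as the only nontrivial point. First I would record absolute continuity. Since $x(\cdot)$ is a Filippov solution it is, by Definition~\ref{def:Filipp sol}, absolutely continuous, so the augmented curve $y(t)\coloneqq(x(t),t)$ is absolutely continuous as well. On any compact subinterval $[t_0,\tau]\subset[t_0,t_1)$ the image $y([t_0,\tau])$ lies in a compact subset of $\mathcal{D}\times[t_0,t_1)$ on which the locally Lipschitz $V$ admits a single Lipschitz constant $L$; the estimate $|V(y(s))-V(y(s'))|\le L\|y(s)-y(s')\|$ composed with the absolute continuity of $y$ then yields absolute continuity of $t\mapsto V(x(t),t)$ on $[t_0,\tau]$, hence on $[t_0,t_1)$. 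Because absolutely continuous functions are differentiable almost everywhere, $\dot V(x(t),t)$ exists for a.e.\ $t$, which settles the first two claims.

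The core of the argument is a pointwise identity valid on a full-measure set $\Omega$ of times at which $\dot x(t)$ exists, $\dot V(x(t),t)$ exists, and $\dot x(t)\in\mathsf{K}[f](x(t),t)$ (the last again from Definition~\ref{def:Filipp sol}). Fix $t\in\Omega$ and write $v\coloneqq(\dot x(t),1)=\dot y(t)$. Using the Lipschitz bound to replace $y(t\pm h)$ by its first-order expansion $y(t)\pm hv+o(h)$ inside $V$, I would identify the right- and left-hand derivatives of $V\circ y$ at $t$ with the one-sided directional derivatives $V'(y(t);v)$ and $-V'(y(t);-v)$. Here the \emph{regularity} of $V$ is decisive: it guarantees that these one-sided directional derivatives exist and coincide with Clarke's generalized directional derivative, so that $V'(y(t);w)=\max_{\xi\in\partial V(y(t))}\xi^\top w$. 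Since $V\circ y$ is differentiable at $t$, its left and right derivatives agree, forcing $\max_{\xi\in\partial V(y(t))}\xi^\top v=\min_{\xi\in\partial V(y(t))}\xi^\top v$. Consequently $\xi^\top v$ is constant over $\xi\in\partial V(y(t))$ and equals $\dot V(x(t),t)$; that is, $\dot V(x(t),t)=\xi^\top(\dot x(t),1)$ for \emph{every} $\xi\in\partial V(x(t),t)$.

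Assembling the inclusion is then immediate. For $t\in\Omega$ the relation $\dot x(t)\in\mathsf{K}[f](x(t),t)$ means the augmented vector $(\dot x(t),1)$ belongs to the set paired with each $\xi$ in the definition of $\dot{\widetilde V}$; combined with the identity above, $\dot V(x(t),t)=\xi^\top(\dot x(t),1)$ lies in $\xi^\top\big[\mathsf{K}[f](x(t),t);1\big]$ for every $\xi\in\partial V(x(t),t)$, hence in the intersection $\dot{\widetilde V}(x(t),t)$. As this holds for a.e.\ $t$, the inclusion $\dot V(x(t),t)\overset{\text{a.e.}}{\in}\dot{\widetilde V}(x(t),t)$ follows.

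The hard part is the middle paragraph, and within it the passage from differentiability of the composite to the identity holding for \emph{all} $\xi\in\partial V$ rather than merely some $\xi$. Without regularity, differentiability of $V\circ y$ would only place $\dot V$ among the values $\{\xi^\top(\dot x,1)\}$ for a single extremal $\xi$, which is insufficient to enter the intersection defining $\dot{\widetilde V}$; it is precisely the equality of the one-sided directional derivatives with the symmetric $\max/\min$ over $\partial V$, afforded by regularity, that collapses $\xi^\top v$ to a common value and thereby forces $\dot V$ into the intersection. I would take care to intersect the three full-measure sets (differentiability of $x$, of $V\circ x$, and the Filippov inclusion) before running the pointwise argument, and to note that the compact-set Lipschitz reduction is what licenses the $o(h)$ replacement inside $V$.
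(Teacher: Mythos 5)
Your proof is correct, and since the paper states this lemma without proof—importing it verbatim from Fischer et al.\ (2013)—the right comparison is with the cited source, whose standard Shevitz--Paden-style argument you reconstruct exactly: absolute continuity of $t\mapsto V(x(t),t)$ via a compact-interval Lipschitz constant, then, on the intersection of the three full-measure sets (differentiability of $x$, differentiability of the composite, and the Filippov inclusion $\dot x(t)\in\mathsf{K}[f](x(t),t)$), regularity identifies the right and left derivatives with $\max_{\xi\in\partial V}\xi^\top v$ and $\min_{\xi\in\partial V}\xi^\top v$ for $v=(\dot x(t),1)$, forcing $\xi^\top v$ to the common value $\dot V(x(t),t)$ and hence into the intersection defining $\dot{\widetilde V}$. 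Nothing is missing; in particular you correctly isolate regularity as the step that upgrades ``$\dot V=\xi^\top v$ for \emph{some} $\xi$'' to ``for \emph{all} $\xi$,'' which is precisely the point on which the early nonsmooth chain-rule arguments had to be repaired.
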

\begin{corollary}[Corollary $2$ of~(\cite{fischer2013lasalle})] \label{th:LaSalle}
	For the system given in (\ref{eq:discont diff eq}), let $\mathcal{D}\subset\mathbb{R}^n$ be an open and connected set containing $x=0$ and suppose that $f$ is Lebesgue measurable and $x\mapsto f(x,t)$ is essentially locally bounded, uniformly in $t$. Let $V:\mathcal{D}\times [t_0,t_1) \to\mathbb{R}$ be locally Lipschitz and regular such that $W_1(x) \leq V(x,t) \leq W_2(x)$, $\forall t\in [t_0,t_1)$, $x\in\mathcal{D}$, and
	\begin{equation*}
		z \leq -W(x(t)), \ \ \forall z\in \dot{\widetilde{V}}(x(t),t), \ t\in [t_0,t_1), \ x \in \mathcal{D},
	\end{equation*}
	where $W_1$ and $W_2$ are continuous positive definite functions and $W$ is a continuous positive semi-definite  on $\mathcal{D}$. Choose $r>0$ and $c>0$ such that $\bar{\mathcal{B}}(0,r) \subset \mathcal{D}$ and $c<\min_{\|x\|=r} W_1(x)$. Then for all Filippov solutions $x:[t_0,t_1)\to\mathbb{R}^n$ of (\ref{eq:discont diff eq}), with $x(t_0)\in {\mathbb{D}} \coloneqq\{x\in \bar{\mathcal{B}}(0,r) : W_2(x) \leq c\}$, it holds that $t_1 = \infty$, $x(t)\in {\mathbb{D}}$, $\forall t\in[t_0,\infty)$, and  $\lim_{t\to\infty} W(x(t)) = 0$.
\end{corollary}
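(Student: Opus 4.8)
The plan is to prove this nonsmooth LaSalle--Yoshizawa statement by transplanting the classical argument to the Filippov setting, with the ordinary time derivative of $V$ replaced throughout by the set-valued generalized derivative $\dot{\widetilde{V}}$ supplied by Lemma \ref{lem:Chain rule}. Four ingredients are needed: (i) monotonicity of $V$ along solutions; (ii) forward invariance of a compact sublevel set together with boundedness of the trajectory; (iii) global existence, $t_1=\infty$; and (iv) a Barbalat-type argument converting integrability of $W(x(\cdot))$ into $W(x(t))\to 0$.

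First I would establish monotonicity and invariance. By Lemma \ref{lem:Chain rule}, along any Filippov solution the map $t\mapsto V(x(t),t)$ is absolutely continuous and $\dot{V}(x(t),t)\overset{\text{a.e.}}{\in}\dot{\widetilde{V}}(x(t),t)$; combined with the hypothesis that every $z\in\dot{\widetilde{V}}$ satisfies $z\le -W(x(t))\le 0$, this gives $\dot{V}(x(t),t)\le -W(x(t))\le 0$ for almost all $t$, so $V(x(t),t)$ is nonincreasing. For $x(t_0)\in\mathbb{D}$ one has $V(x(t_0),t_0)\le W_2(x(t_0))\le c$, hence $W_1(x(t))\le V(x(t),t)\le c$ for all $t$ in the maximal interval of existence. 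The barrier step is the key to invariance: since $c<\min_{\|x\|=r}W_1(x)$, the trajectory can never reach the sphere $\|x\|=r$ (there $W_1>c$, contradicting $W_1(x(t))\le c$), so by continuity it remains in the compact sublevel set $\{x:\|x\|\le r,\ W_1(x)\le c\}\subset\bar{\mathcal{B}}(0,r)\subset\mathcal{D}$.

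Next I would obtain global existence and the integral bound. Because the solution stays in a fixed compact subset of $\mathcal{D}$ on which $f$ is essentially bounded uniformly in $t$, the standard continuation theory for Filippov solutions forbids finite-time escape, so $t_1=\infty$. Since $V(x(t),t)$ is nonincreasing and bounded below by $W_1(x(t))\ge 0$, it converges to a finite limit $V_\infty\ge 0$. Integrating $\dot{V}\le -W(x(t))$ from $t_0$ to $t$ and letting $t\to\infty$ yields $\int_{t_0}^{\infty}W(x(\tau))\,d\tau\le V(x(t_0),t_0)-V_\infty<\infty$, so $W(x(\cdot))$ is integrable on $[t_0,\infty)$.

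Finally I would close with Barbalat's lemma, which is where the nonsmoothness genuinely intervenes. On the compact invariant set the Filippov inclusion $\dot{x}\in\mathsf{K}[f](x,t)$ is essentially bounded uniformly in $t$, so $\dot{x}$ is bounded a.e.\ and $x(\cdot)$ is Lipschitz, hence uniformly continuous on $[t_0,\infty)$; composing with $W$, which is continuous and therefore uniformly continuous on the compact set, shows $t\mapsto W(x(t))$ is uniformly continuous. Barbalat's lemma then converts the finite integral into $\lim_{t\to\infty}W(x(t))=0$. I expect the main obstacle to be the rigorous handling of the chain-rule and Barbalat steps in the discontinuous setting: one must justify that $\dot{V}\le -W$ holds almost everywhere from the set-valued derivative of a merely regular, time-varying $V$, and that essential boundedness of $\mathsf{K}[f]$ on the compact set delivers uniform continuity of $W(x(\cdot))$ despite $\dot{x}$ existing only almost everywhere.
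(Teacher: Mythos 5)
This corollary is imported verbatim from the literature (it is Corollary 2 of \cite{fischer2013lasalle}); the paper itself provides no proof, so the relevant comparison is with the standard proof in the cited source, and your reconstruction follows precisely that route: Lemma \ref{lem:Chain rule} gives absolute continuity of $t\mapsto V(x(t),t)$ and $\dot{V}(x(t),t)\overset{\text{a.e.}}{\in}\dot{\widetilde{V}}(x(t),t)$, which combined with the hypothesis $z\le -W(x(t))$ yields $\dot{V}\le -W\le 0$ a.e.\ and hence monotonicity; the barrier argument using $c<\min_{\|x\|=r}W_1(x)$ confines the trajectory to a compact subset of $\mathcal{D}$; essential boundedness of $\mathsf{K}[f]$ on that compact set, uniformly in $t$, rules out finite escape time and makes $x(\cdot)$ Lipschitz; and integrating $\dot{V}\le -W$ plus Barbalat's lemma (valid since $W\circ x$ is uniformly continuous) delivers $W(x(t))\to 0$. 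All four steps are sound, and the absolute continuity needed both for the monotonicity and for the integral bound is correctly sourced from the chain-rule lemma rather than assumed. One fine point worth noting: your barrier argument establishes invariance of the set $\{x\in\bar{\mathcal{B}}(0,r): W_1(x)\le c\}$, not of $\mathbb{D}=\{x\in\bar{\mathcal{B}}(0,r): W_2(x)\le c\}$ as the transcribed statement literally asserts; from $W_1\le V\le W_2$ one only obtains $W_1(x(t))\le V(x(t),t)\le c$, and $W_2(x(t))\le c$ does not follow. This is a looseness in the statement as quoted in the paper rather than a defect in your argument --- the containment you prove is what the classical Yoshizawa-type proof actually delivers, and it suffices for everything the paper extracts from the corollary in the proofs of Theorem \ref{th:as} and Corollary \ref{th:as unicycle} (boundedness of $\widetilde{x}$, $t_1=\infty$, and $\lim_{t\to\infty}Q(\widetilde{x}(t))=0$).
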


\begin{proof}[Proof of Theorem \ref{th:as}]
	
		We re-write first the condition of Assumption \ref{eq:assumption bound}. Note that 
		$p_{\textup{d}}(t)$ and its derivatives are bounded functions of time. Moreover, since $e=x-p_\textup{d}$, $\dot{e} = \dot{x} - \dot{p}_\textup{d}$, it holds that $\|\bar{x}\| \leq \|e\| + \|\dot{e}\| + \|p_\textup{d}\| + \|\dot{p}_\textup{d}\|$ and $\dot{v}_\textup{d} = \ddot{p}_\textup{d} - k_1\dot{e}$, as well as $e_v = \dot{x} - v_\textup{d} = \dot{e} + k_1 e$ implying $\dot{e} = e_v - k_1 e$. Therefore, in view of (\ref{eq:assumption bound}), one can find positive constants $d_1$, $d_2$, $D$ such that 
		 \begin{align}\label{eq:assumption bound new}
		 	\frac{1}{\underline{g}}\|\underline{g} e + f(\bar{x},t) + g(\bar{x},t)u_\textup{nn}(\bar{x},t) - \dot{v}_\textup{d} \| \leq d_1 \|e \| + d_2\|e_v\| + D,
		 \end{align}
		 for all $\bar{x}\in\mathbb{R}^{2n}$, $t \geq 0$, where $\underline{g}$ is the minimum eigenvalue of $g(\bar{x},t)$, which is positive owing to the positive definitiveness of $g(\cdot)$. Inequality (\ref{eq:assumption bound new}) will be used later in the proof. 
		 
		 Let now a constant $\alpha$ such that $\frac{d_1 \alpha}{2} < k_1$. As will be clarified later, the adaptation variables $\hat{\ell}_1$, $\hat{\ell}_2$ aim to approximate the constants $\frac{d_1}{2\alpha} + d_2$ and $D$, respectively. 	
		 Therefore, let $\ell_1 \coloneqq \frac{d_1}{2\alpha} + d_2$, $\ell_2 \coloneqq D$, and the respective error terms $\widetilde{\ell}_1 \coloneqq \hat{\ell}_1 - \ell_1$, $\widetilde{\ell}_2 \coloneqq \hat{\ell}_2 - \ell_2$, as well as the overall state $\widetilde{x} \coloneqq [e^\top,e_v^\top,\widetilde{\ell}_1,\widetilde{\ell}_2]^\top \in \mathbb{R}^{{2n}+2}$. 	
	 	Since the control policy is discontinuous, we use the notion of Filippov solutions. The Filippov regularization of $u$ is $\mathsf{K}[u] = u_\textup{nn}(x,t) - k_2e_v - \hat{\ell}_1e_v -  \hat{\ell}_2\hat{\mathsf{E}}_v$, where $\hat{\mathsf{E}}_v \coloneqq \frac{e_v}{\|e_v\|}$ if $e_v\neq 0$ and $\hat{\mathsf{E}}_v \in (-1,1)$ otherwise. Note that, in any case, it holds that $e_v^\top \hat{\mathsf{E}}_v = \|e_v\|$. 
		
		Let now the continuously differentiable function 
		\begin{align*}
			V(\widetilde{x})  \coloneqq \frac{1}{2}\|e\|^2 + \frac{1}{2\underline{g}}\|e_v\|^2 + \sum_{i\in\{1,2\}}\frac{1}{2k_{\ell_i}}\widetilde{\ell}_i 
		\end{align*}	
		which satisfies $W_1(\widetilde{x}) \leq V(\widetilde{x}) \leq W_2(\widetilde{x})$ for $W_1(\widetilde{x}) \coloneqq \min\{\frac{1}{2}, \frac{1}{2\underline{g}}, \frac{1}{2k_{\ell_1}}, \frac{1}{2k_{\ell_2}}\}\|\widetilde{x}\|^2$, $W_2(\widetilde{x}) \coloneqq \max\{\frac{1}{2}, \frac{1}{2\underline{g}}, \frac{1}{2k_{\ell_1}}, \frac{1}{2k_{\ell_2}}\}\| \widetilde{x} \|^2$. 
		According to Lemma \ref{lem:Chain rule}, $\dot{V}(\widetilde{x}) \overset{a.e.}{\in} \dot{\widetilde{V}}(\widetilde{x})$, with $\dot{\widetilde{V}} \coloneqq \bigcap_{\xi \in \partial V(\widetilde{x})} \mathsf{K}[\dot{\widetilde{x}}]$. Since $V(\widetilde{x})$ is continuously differentiable, its generalized gradient reduces to the standard gradient and thus it holds that $\dot{\widetilde{V}}(\widetilde{x}) = \nabla V^\top \mathsf{K}[\dot{\widetilde{x}}]$, where $\nabla V = [e^\top, \frac{1}{\underline{g}} e_v^\top, \frac{1}{k_{\ell_1}}\widetilde{\ell}_1, \frac{1}{k_{\ell_2}}\widetilde{\ell}_2]^\top$.
		By differentiating $V$ and using (3), one obtains	
		\begin{align*}
			\dot{V} \subset& \widetilde{W}_s \coloneqq e^\top(\dot{x} - \dot{p}_\textup{d}) +  \frac{1}{\underline{g}}e_v^\top \big(  
			 f(\bar{x},t) + g(\bar{x},t)u - \dot{v}_{\textup{d}}
			\big) + \frac{1}{k_{\ell_1}}\widetilde{\ell}_1\dot{\hat{\ell}}_1 + \frac{1}{k_{\ell_2}}\widetilde{\ell}_2\dot{\hat{\ell}}_2
		\end{align*}
		and by using $\dot{x} = e_v + v_\textup{d}$ and substituting the control policy (\ref{eq:v d}), (\ref{eq:control as}), and inequality (\ref{eq:assumption bound new}), 
		\begin{align*}
			\widetilde{W}_s \coloneqq& -k_1\|e\|^2 + \frac{1}{\underline{g}}e_v^\top\big(\underline{g}e + f(\bar{x},t) + g(\bar{x},t)u_\textup{nn}(\bar{x},t) - \dot{v}_\textup{d}  \big) - \frac{1}{\underline{g}}e_v^\top g(\bar{x},t)\big( k_2 e_v + \hat{\ell}_1e_v + \hat{\ell}_2 \hat{\mathsf{E}}_v  \big) \\						
			& + \widetilde{\ell}_1 \|e_v\|^2 + \widetilde{\ell}_2 \|e_v\| \\
			\leq& -k_1\|e\|^2 + d_1\|e_v\|\|e\| + d_2\|e_v\|^2 + D\|e_v\| - \frac{1}{\underline{g}}e_v^\top g(\bar{x},t)\big( k_2 e_v + \hat{\ell}_1e_v + \hat{\ell}_2 \hat{\mathsf{E}}_v  \big) \\
			&+ \widetilde{\ell}_1 \|e_v\|^2 + \widetilde{\ell}_2 \|e_v\|
		\end{align*}
	Note from (\ref{eq:control as}) and the fact that $\hat{\ell}_1(0) > 0$, $\hat{\ell}_2(0) > 0$ that $\hat{\ell}_1(t) > 0$ and $\hat{\ell}_2(t)) > 0$ for all $t\geq 0$. Moreover, recall that $g(\bar{x},t)$ is positive definite for all $\bar{x}\in\mathbb{R}^{2n}$, $t\geq0$, with $\underline{g}$ being its minimum (and positive) eigenvalue.
	 Therefore, we conclude that $\widetilde{W}_s$ becomes
	 \small
	 \begin{align*}
	 	\widetilde{W}_s  \leq& -k_1\|e\|^2 + d_1\|e_v\|\|e\| + d_2\|e_v\|^2 + D\|e_v\| - (k_2+\hat{\ell}_1)\|e_v\|^2 - \hat{\ell}_2\|e_v\| + \widetilde{\ell}_1 \|e_v\|^2 + \widetilde{\ell}_2 \|e_v\|
	 \end{align*}
 \normalsize
	By incorporating the term $\alpha$ in the term $d_1\|e_v\|\|e\|$, we obtain $d_1\|e_v\|\|e\| = d_1\alpha \frac{\|e_v\|}{\alpha}\|e\|$ and by using the property $ab \leq \frac{1}{2}a^2 + \frac{1}{2}b^2$ for any constants $a$,$b$, we obtain $d_1\alpha \frac{\|e_v|}{\alpha}\|e\| \leq \frac{d_1\alpha}{2}\|e\|^2 + \frac{d_1}{2\alpha}\|e_v\|^2$. Therefore, $\widetilde{W}_s$ becomes
	\small
	\begin{align*}
	\widetilde{W}_s \leq & -\left(k_1 - \frac{d_1\alpha}{2}\right)\|e\|^2 + \left(\frac{d_1}{2\alpha} + d_2\right)\|e_v\|^2 + D\|e_v\| - (k_2+\hat{\ell}_1)\|e_v\|^2 - \hat{\ell}_2\|e_v\| + \widetilde{\ell}_1 \|e_v\|^2+ \widetilde{\ell}_2 \|e_v\| \\
	=& -\left(k_1 - \frac{d_1\alpha}{2}\right)\|e\|^2 + \ell_1 \|e_v\|^2 + \ell_2 \|e_v\| - (k_2+\hat{\ell}_1)\|e_v\|^2 - \hat{\ell}_2\|e_v\|+ \widetilde{\ell}_1 \|e_v\|^2+ \widetilde{\ell}_2 \|e_v\| \\
	=& -\left(k_1 - \frac{d_1\alpha}{2}\right)\|e\|^2 - k_2\|e_v\|^2 =:-Q(\widetilde{x})
	\end{align*}
\normalsize
	Therefore, it holds that $\zeta \leq - Q(\widetilde{x})$, for all $\zeta\in \dot{\widetilde{V}}$, with $Q$ being a continuous and positive semi-definite function in $\mathbb{R}^{2n+2}$, since $k_1 - \frac{d_1\alpha}{2}>0$. Choose now any finite $r>0$ and let $c < \min_{\|\widetilde{x}\|=r} W_1(\widetilde{x})$. Hence, all the conditions of Corollary \ref{th:LaSalle} are satisfied and hence, all Filippov solutions starting from $\widetilde{x}(0) \in \Omega_f \coloneqq \{ \widetilde{x} \in \mathcal{B}(0,r) : \widetilde{W}_2(\widetilde{x}) \leq c \}$ are bounded and remain in $\Omega_f$, satisfying $\lim_{t\to\infty} Q(\widetilde{x}(t)) = 0$. 	
		Note that $r$, and hence $c$, can be arbitrarily large allowing and finite initial condition $\widetilde{x}(0)$. Moreover, the boundedness of $\widetilde{x}$ implies the boundedness of $e$, $\dot{e}$, $e_v$,  $\widetilde{\ell}_1$, and $\widetilde{\ell}_2$, and hence of $\hat{\ell}_1(t)$ and $\hat{\ell}_2(t)$, for all $t\in\mathbb{R}_{\geq 0}$. In view of (5), we finally conclude the boundendess of $u(\cdot)$, $\dot{\hat{\ell}}_1$, and $\dot{\hat{\ell}}_2$, for all $t\in\mathbb{R}_{\geq 0}$, leading to the conclusion of the proof.
		
\end{proof}

\begin{proof}[Proof of Corollary \ref{th:as unicycle}]	
	
	The derivatives of $e_d$, $\beta$ in (\ref{eq:ed beta dot}) can be written as 
	\begin{align*}
		\begin{bmatrix}
			\dot{e}_d \\
			\dot{\beta}
		\end{bmatrix}
		= 
		G\begin{bmatrix}
			v \\ 
			\omega
		\end{bmatrix}
		+ \begin{bmatrix}
			\dot{p}_{\textup{d},1}\cos(\phi+\beta) + \dot{p}_{\textup{d},2}\sin(\phi+\beta) \\
			 \frac{\dot{p}_{\textup{d},2}}{e_d}\cos(\phi+\beta) - \frac{\dot{p}_{\textup{d},1}}{e_d}\sin(\phi+\beta) 
		\end{bmatrix}
	\end{align*}
	where 
	\begin{align*}
		G \coloneqq \begin{bmatrix}
			-\cos\beta & 0 \\
			\frac{\sin\beta}{e_d} & - 1
		\end{bmatrix}
	\end{align*}

	It can be verified that the reference velocity signals, designed in (\ref{eq:v_des unicycle}), can be written as 
	\begin{align*}
		\begin{bmatrix}
			v_{\textup{d}} \\
			\omega_{\textup{d}}
		\end{bmatrix}		
	= G^{-1}\begin{bmatrix}
		-\dot{p}_{\textup{d},1}\cos(\phi+\beta) - \dot{p}_{\textup{d},2}\sin(\phi+\beta) - k_de_d \\
		- \frac{\dot{p}_{\textup{d},2}}{e_d}\cos(\phi+\beta) + \frac{\dot{p}_{\textup{d},1}}{e_d}\sin(\phi+\beta) - k_\beta \beta
	\end{bmatrix}
	\end{align*}
	and therefore, by using the relations $v = e_v + v_{\textup{d}}$ and $\omega = e_\omega + \omega_{\textup{d}}$, $\dot{e}_d$ and $\dot{\beta}$ can be written as 
		\begin{align} \label{eq:ed beta dot 2}	
			\begin{bmatrix}
				\dot{e}_d \\
				\dot{\beta}
			\end{bmatrix} 
			= \begin{bmatrix}
				-k_d e_d \\
				-k_\beta \beta
			\end{bmatrix}
			+ G \begin{bmatrix}
				e_v \\
				e_\omega
			\end{bmatrix} = 
			\begin{bmatrix}
				-k_d e_d \\
				-k_\beta \beta
			\end{bmatrix}
			+ \begin{bmatrix}
				-e_v \cos\beta \\
				e_v \frac{\sin\beta}{e_d} - e_\omega
			\end{bmatrix}		
		\end{align}
	

	The control design follows the back-stepping methodology (\cite{krstic1995nonlinear}). Let, therefore, the function 
	$V_1 \coloneqq \frac{1}{2}(e_d^2 + \beta^2)$, which, after time differentiation and use of (\ref{eq:ed beta dot 2}), yields
	\begin{align} \label{eq:V_1_dot uni}
		\dot{V}_1 =& -k_d e_d^2 - k_\beta \beta^2 - e_de_v\cos\beta + \beta e_v \frac{\sin\beta}{e_d} - \beta e_\omega
	\end{align}
	We will cancel the two last terms of $\dot{V}_1$ using the control input (\ref{eq:control as unicycle}). 
	
	First, we note that the inertia matrix of the system $M$, appearing in the unicycle dynamics (\ref{eq:dynamics unicycle}), has the form (\cite{ivanjko2010modelling}) $M = \begin{bmatrix}
		M_1 & M_2 \\
		M_2 & M_1 
	\end{bmatrix}$
	with $M_1 \coloneqq \frac{mr^2}{4} + \frac{(I_C + md^2)r^2}{4R^2} + I_0$, $M_2 \coloneqq \frac{mr^2}{4} - \frac{(I_C + md^2)r^2}{4R^2}$, and where $I_C$ is the moment of inertia of the vehicle with respect to point C (see Fig. \ref{fig:unicycle}), $I_0$ is the moment of inertia of the the wheels, and $d$ is the distance of the between point C and the vehicle's center of mass $(p_1,p_2)$. Therefore, the second part of the unicycle dynamics (\ref{eq:dynamics unicycle}) can be written as 
	\begin{align*}
		&M_1 \ddot{\theta}_R + M_2 \ddot{\theta}_L = u_R  + f_{\theta,R}(\bar{x},t) \\
		&M_2 \ddot{\theta}_R + M_1 \ddot{\theta}_L = u_L  + f_{\theta,L}(\bar{x},t) \\
	\end{align*}
	with $f_{\theta,R}$ and $f_{\theta,L}$ denoting the elements of $f_\theta$. By summing and subtracting the aforementioned equations, we obtain
	\begin{subequations}	\label{eq:unic dynamics sum and diff}
		\begin{align} 
			&(M_1 + M_2)(\ddot{\theta}_R + \ddot{\theta}_L) = u_R + u_L + f_{\theta,R}(\bar{x},t) + f_{\theta,L}(\bar{x},t) \\
			&(M_1 - M_2)(\ddot{\theta}_R - \ddot{\theta}_L) = u_R - u_L + f_{\theta,R}(\bar{x},t) - f_{\theta,L}(\bar{x},t)
		\end{align}
	\end{subequations}
	From the definition of $e_v$ and $e_\omega$, it holds that $\dot{e}_v = \dot{v}-\dot{v}_{\textup{d}}$, $\dot{e}_\omega = \dot{\omega}-\dot{\omega}_{\textup{d}}$, which, by using the relations $v = \frac{r}{2}(\dot{\theta}_R + \dot{\theta}_L)$, $\omega = \frac{r}{2R}(\dot{\theta}_R - \dot{\theta}_L)$ and (\ref{eq:unic dynamics sum and diff}), becomes 
	\begin{subequations}	\label{eq:unic dynamics sum and diff 2}
		\begin{align} 
			&\dot{e}_v = \frac{r}{2(M_1+M_2)}\bigg(u_R + u_L + f_{\theta,R}(\bar{x},t) + f_{\theta,L}(\bar{x},t) \bigg)  - \dot{v}_{\textup{d}}\\
			&\dot{e}_\omega = \frac{r}{2R(M_1 - M_2)}\bigg(u_R - u_L + f_{\theta,R}(\bar{x},t) - f_{\theta,L}(\bar{x},t)  \bigg) - \dot{\omega}_{\textup{d}}
		\end{align}
	\end{subequations}		
	
	Define now $\ell_{v} \coloneqq 2\frac{M_1+M_2}{r}$, $\ell_{\omega} \coloneqq 2R\frac{M_1 - M_2}{r}$. The adaptation terms $\hat{\ell}_{v}$, and $\hat{\ell}_{\omega}$, used in the control mechanism (\ref{eq:control as unicycle}), aim to approximate $\ell_{v}$ and $\ell_{\omega}$, respectively. Note that $\hat{\ell}_{v}$ and $\hat{\ell}_{\omega}$ are positive, and we define the errors $\widetilde{\ell}_v \coloneqq \hat{\ell}_v - {\ell}_v$, $\widetilde{\ell}_\omega \coloneqq \hat{\ell}_\omega - \ell_\omega$. 
	
	We re-write next the condition of Assumption \ref{ass:bound unicycle}. It holds that 
	$\|p\| = \sqrt{p_1^2+p_2^2} \leq 2e_d + |p_{\textup{d},1}| + |p_{\textup{d},2}|$. Moreover, in view of (\ref{eq:v_des unicycle}) as well as the fact that $\beta\in(-\bar{\beta},\bar{\beta})\subset(-\frac{\pi}{2},\frac{\pi}{2})$, where $\bar{\beta}$ is the bound of $\beta$, stated in Theorem \ref{th:as unicycle}, it holds that $|v| \leq |e_v| + |v_\textup{d}| \leq |e_v| + \frac{1}{\cos(\bar{\beta})}(|\dot{p}_{\textup{d},1}| + |\dot{p}_{\textup{d},2}| + k_de_d)$, $|\omega| \leq |e_\omega| + |\omega_\textup{d}| \leq |e_\omega| + \frac{1}{\cos\bar{\beta}}( \alpha_1 + k_d) + k_\beta |\beta|$, where we also use the fact that $\frac{|\dot{p}_{\textup{d},1} \sin\phi - \dot{p}_{\textup{d},2}\cos\phi|}{e_d} \leq \alpha_1$. By also recalling that $\phi$ moves on the unit circle, we conclude that Assumption \ref{ass:bound unicycle} becomes
	\begin{align} \label{eq:assumption bound unicycle new}
		\|f_{\theta}(\bar{x},t) + u_\textup{nn}(\bar{x},t)\| \leq d_d e_d + d_\beta |\beta| + d|e_v| + d|e_\omega| + D,
	\end{align}
	with $d_d \coloneqq d(2 + \frac{k_d}{\cos\bar{\beta}})$, $d_\beta \coloneqq dk_\beta$, and $D\coloneqq d(2\bar{p}_{\textup{d}} + 2\pi + \frac{1}{\cos\bar{\beta}}(\alpha_1 + 2\bar{v}_{\textup{d}}+ 1)$, where $\bar{p}_{\textup{d}}$ and $\bar{v}_{\textup{d}}$ are the upper bounds of $p_{\textup{d},i}$, $v_{\textup{d},i}$, respectively, $i\in\{1,2\}$.

	Let now positive constants $\gamma_d$ and $\gamma_\beta$ such that $k_d > 2d_d\gamma_d$ and $k_\beta > 2d_\beta \gamma_\beta$, and define 
	 \begin{align}
	 	\ell_{1} \coloneqq& \frac{d_d}{\gamma_d} + \frac{d_\beta}{\gamma_\beta} + 4d 
 	 \end{align}
	 which is the constant to be approximated by $\hat{\ell}_{1}$; its derivation will be clarified later. Let also $\ell_2 \coloneqq {2D}$, which we aim to approximate with the adaptation variable $\hat{\ell}_2$. 
	 We define hence the respective errors $\widetilde{\ell}_1 \coloneqq \hat{\ell}_{1} - \ell_{1}$ and $\widetilde{\ell}_2 \coloneqq \hat{\ell}_2 - \ell_2$ and define the overall vector $\widetilde{x} \coloneqq [e_d,\beta,e_v,e_\omega,\widetilde{\ell}_v,\widetilde{\ell}_\omega,\widetilde{\ell}_1,\widetilde{\ell}_2]^\top \in \mathbb{R}^8$. 
	 Since the control mechanism is discontinuous, we use again the notion of Filippov solutions. The Filippov regularization $\mathsf{K}[u]$ of $u$ differs from $u$ only in the terms $\hat{e}_v$ and $\hat{e}_\omega$, which are replaced by $\hat{\mathsf{E}}_v$, defined as $\hat{\mathsf{E}}_v \coloneqq \frac{e_v}{|e_v|}$ if $e_v\neq 0$, and $\hat{\mathsf{E}}_v \in (-1,1)$ otherwise, and $\hat{\mathsf{E}}_\omega$, defined as $\hat{\mathsf{E}}_\omega \coloneqq \frac{e_\omega}{|e_\omega|}$ if $e_\omega\neq 0$, and $\hat{\mathsf{E}}_\omega \in (-1,1)$, respectively.	
	
	  Consider now the continuously differentiable and positive definite function 
	 \begin{align*}
	 	V(\bar{x}) \coloneqq V_1 + \frac{\ell_v}{2} e_v^2 + \frac{\ell_\omega}{2} e_\omega^2  + \frac{1}{2 k_{v}}\widetilde{\ell}_{v}^2 +  \frac{1}{2 k_{\omega}}\widetilde{\ell}_{\omega_i}^2 + \sum_{i\in\{1,2\}}\frac{1}{2 k_{i}}\widetilde{\ell}_{i}^2 
	 \end{align*}
	 which satisfies $W_1(\widetilde{x}) \leq V(\widetilde{x}) \leq W_2(\widetilde{x})$ for $W_1(\widetilde{x}) \coloneqq \min\{ \frac{1}{2}, \frac{\ell_v}{2}, \frac{\ell_\omega}{2}, \frac{1}{2k_{v}}, \frac{1}{2k_{\omega}}, \frac{1}{2k_1}, \frac{1}{2k_2}\}\|\widetilde{x}\|^2$, $W_2(\widetilde{x}) \coloneqq \max\{ \frac{1}{2}, \frac{\ell_v}{2}, \frac{\ell_\omega}{2}, \frac{1}{2k_{v}}, \frac{1}{2k_{\omega}}, \frac{1}{2k_1}, \frac{1}{2k_2}\}\| \widetilde{x} \|^2$.
	 According to Lemma \ref{lem:Chain rule}, $\dot{V}(\widetilde{x}) \overset{a.e.}{\in} \dot{\widetilde{V}}(\widetilde{x})$, with $\dot{\widetilde{V}} \coloneqq \bigcap_{\xi \in \partial V(\widetilde{x})} \mathsf{K}[\dot{\widetilde{x}}]$. Since $V(\widetilde{x})$ is continuously differentiable, its generalized gradient reduces to the standard gradient and thus it holds that $\dot{\widetilde{V}}(\widetilde{x}) = \nabla V^\top \mathsf{K}[\widetilde{x}]$, where $\nabla V = [e_d, \beta, \ell_ve_v, \ell_\omega e_\omega, \frac{1}{k_v}\widetilde{\ell}_v, \frac{1}{k_{\omega}}\widetilde{\ell}_\omega, \frac{1}{k_1}\widetilde{\ell}_1, \frac{1}{k_2}\widetilde{\ell}_2]^\top$.	 
	 Therefore, by differentiating $V$ and employing (\ref{eq:V_1_dot uni}) and (\ref{eq:unic dynamics sum and diff}), we obtain
	 \begin{align*}
	 	\dot{V} \subset \widetilde{W} \coloneqq & -k_d e_d^2 - k_\beta \beta^2 - e_de_v\cos\beta + \beta e_v \frac{\sin\beta}{e_d} - \beta e_\omega + e_v\big(u_R + u_L + f_{\theta,R} + f_{\theta,L} - \ell_{v}\dot{v}_{\textup{d}} \big) \\
	 	&+ e_\omega\big(u_R - u_L + f_{\theta,R} - f_{\theta,L} - \ell_{\omega}\dot{\omega}_{\textup{d}} \big) + \frac{1}{k_v}\widetilde{\ell}_v\dot{\hat{\ell}}_v + \frac{1}{k_\omega}\widetilde{\ell}_\omega\dot{\hat{\ell}}_\omega + \sum_{i\in\{1,2\}} \frac{1}{k_i}\widetilde{\ell}_i \dot{\hat{\ell}}_i
	 \end{align*}
	 By substituting the control and adaptation policy (\ref{eq:control as unicycle}), $\widetilde{W}$ becomes
	 \begin{align*}
	 	\widetilde{W} =&  -k_d e_d^2 - k_\beta \beta^2 - k_v e_v^2 - k_\beta e_\omega^2 + e_v \dot{v}_{\textup{d}}\widetilde{\ell}_{v} + e_\omega \dot{\omega}_{\textup{d}}\widetilde{\ell}_{\omega} - \hat{\ell}_{1}(e_v^2 + e_\omega^2) - \hat{\ell}_2(|e_v|+|e_\omega|) \\
	 	& + e_v(u_{\textup{nn},R} + u_{\textup{nn},L} + f_{\theta,R} + f_{\theta,L})+ e_\omega(u_{\textup{nn},R} - u_{\textup{nn},L} + f_{\theta,R} - f_{\theta,L}) \\
	 	& - \widetilde{\ell}_ve_v\dot{v}_\textup{d} - \widetilde{\ell}_\omega e_\omega\dot{\omega}_\textup{d} + \widetilde{\ell}_1 (e_v^2 + e_\omega^2)  + \widetilde{\ell}_2 (|e_v| + |e_\omega|)
	 \end{align*}	 
	 where $u_{R,\textup{nn}}$ and $u_{L,\textup{nn}}$ are the elements of $u_\textup{nn}$, i.e., $u_\textup{nn} = [u_{R,\textup{nn}},u_{L,\textup{nn}}]^\top$.	 
	 It is easy to conclude that $|u_{\textup{nn},R} + u_{\textup{nn},L} + f_{\theta,R} + f_{\theta,L}| \leq |u_{\textup{nn},R} + f_{\theta,R}| + |u_{\textup{nn},L} + f_{\theta,L}| \leq 2\|u_{\textup{nn}} + f_\theta\|$ and $|u_{\textup{nn},R} - u_{\textup{nn},L} + f_{\theta,R} - f_{\theta,L}| \leq |u_{\textup{nn},R} + f_{\theta,R}| + |u_{\textup{nn},L} + f_{\theta,L}| \leq 2\|u_{\textup{nn}} + f_\theta\|$ and hence, in view of (\ref{eq:assumption bound unicycle new}), 
	 \begin{align*}
		\left.
	 	\begin{matrix}
	 		|u_{\textup{nn},R} + u_{\textup{nn},L} + f_{\theta,R} + f_{\theta,L}|  \\
	 		|u_{\textup{nn},R} - u_{\textup{nn},L} + f_{\theta,R} - f_{\theta,L}| 		
	 	\end{matrix} \right\} 		
	 	\leq \bar{D} \coloneqq 2d_d e_d + 2d_\beta |\beta| + 2d|e_v| + 2d|e_\omega| + 2D 
	 \end{align*}
	Therefore, $\widetilde{W}$ becomes
	\begin{align*}
		\widetilde{W} =&  -k_d e_d^2 - k_\beta \beta^2 - k_v e_v^2 - k_\beta e_\omega^2  - \hat{\ell}_{1}(e_v^2 + e_\omega^2) - \hat{\ell}_2(|e_v|+|e_\omega|) + \bar{D}(|e_v|+|e_\omega|) \\
		& + \widetilde{\ell}_1 (e_v^2 + e_\omega^2)  + \widetilde{\ell}_2 (|e_v| + |e_\omega|)
	\end{align*}
	The term $\bar{D}(|e_v|+|e_\omega|)$ yields 
	\begin{align*}
		\bar{D}(|e_v|+|e_\omega|) =& 2d_de_d|e_v| + 2d_\beta|\beta||e_v| + 2de_v^2 + 4d|e_v||e_\omega| + 2D|e_v| + 2d_de_d|e_\omega|  \\
		& + 2d_\beta|\beta||e_\omega| + 2de_\omega^2 + 2D|e_\omega|
	\end{align*}
	By setting $2d_de_d|e_v| = 2d_d\gamma_de_d\frac{|e_v|}{\gamma_d}$, $2d_de_d|e_\omega| = 2d_d\gamma_de_d\frac{|e_\omega|}{\gamma_d}$, 
	 and $2d_\beta|\beta||e_v| = 2d_\beta \gamma_\beta|\beta|\frac{|e_v|}{\gamma_\beta}$, $2d_\beta|\beta||e_\omega| = 2d_\beta \gamma_\beta|\beta|\frac{|e_\omega|}{\gamma_\beta}$ and
	completing the squares, one obtains
	\begin{align*}
		\bar{D}(|e_v|+|e_\omega|) \leq& 2d_d\gamma_de_d^2 + 2d_\beta\gamma_\beta\beta^2 + \left(\frac{d_d}{\gamma_d} + \frac{d_\beta}{\gamma_\beta} + 4d  \right)(e_v^2 + e_\omega^2) + 2D(|e_v|+|e_\omega|) \\
		=& 2d_d\gamma_de_d^2 + 2d_\beta\gamma_\beta\beta^2 + \ell_1(e_v^2 + e_\omega^2) + \ell_2(|e_v|+|e_\omega|)
	\end{align*}
	Hence, $\widetilde{W}$ becomes
	\begin{align*}
		\widetilde{W} \leq& -(k_d - 2d_d\gamma_d)e_d^2 - (k_\beta - 2d_\beta \gamma_\beta)\beta^2 - k_ve_v^2 - k_\beta e_\omega^2 - \hat{\ell}_1(e_v^2 + e_\omega^2) - \hat{\ell}_2(|e_v| + |e_\omega|) \\
		& + {\ell}_1(e_v^2 + e_\omega^2) + {\ell}_2(|e_v| + |e_\omega|) + \widetilde{\ell}_1 (e_v^2 + e_\omega^2)  + \widetilde{\ell}_2 (|e_v| + |e_\omega|)  \\
		=&-(k_d - 2d_d\gamma_d)e_d^2 - (k_\beta - 2d_\beta \gamma_\beta)\beta^2 - k_ve_v^2 - k_\beta e_\omega^2	=: Q(\widetilde{x})
	\end{align*}			
	Therefore, it holds that $\zeta \leq - Q(\widetilde{x})$, for all $\zeta\in \dot{\widetilde{V}}$, with $Q$ being a continuous and positive semi-definite function in $\mathbb{R}^{8}$, since $k_d - 2d_d\gamma_d>0$, $k_\beta - 2d_\beta \gamma_\beta > 0$. Choose now any finite $r>0$ and let $c < \min_{\|\widetilde{x}\|=r} W_1(\widetilde{x})$. Hence, all the conditions of Corollary \ref{th:LaSalle} are satisfied and hence, all Filippov solutions starting from $\widetilde{x}(0) \in \Omega_f \coloneqq \{ \widetilde{x} \in \mathcal{B}(0,r) : \widetilde{W}_2(\widetilde{x}) \leq c \}$ are bounded and remain in $\Omega_f$, satisfying $\lim_{t\to\infty} Q(\widetilde{x}(t)) = 0$. 
	
	Note that $r$, and hence $c$, can be arbitrarily large allowing and finite initial condition $\widetilde{x}(0)$. Moreover, the boundedness of $\widetilde{x}$ implies the boundedness of $e$, $\dot{e}$, $e_v$,  $\widetilde{\ell}_1$, and $\widetilde{\ell}_2$, and hence of $\hat{\ell}_1(t)$ and $\hat{\ell}_2(t)$, for all $t\in\mathbb{R}_{\geq 0}$. Finally, in view of the conditions $\frac{|\sin\phi\dot{p}_{\textup{d},1}-\cos\phi\dot{p}_{\textup{d},2}|}{e_d} < \alpha_1$ and $\frac{\sin\beta}{e_d} < \alpha_2$, one concludes the boundedness of all closed-loop signals, for all $t\geq 0$

\end{proof}

\section{Appendix} \label{app:B}

We provide here more  information on the experimental results of Section \ref{sec:exp res}. All the results were obtained on MATLAB environment using the ODE simulator. We performed the training of the neural networks in with the pytorch library in Python environment. The neural networks consist of $4$ fully connected layers of $512$ neurons; each layer is followed by a batch normalization module and a ReLU activation function. For the training we use the adam optimizer and the mean-square-error loss function. In all cases we choose a batch size of 256, and we train until a desirable average (per batch) loss of the order of $10^{-4}$ is achieved. All the values of the data used for the training were normalized in $[0,1]$.

Regarding the robotic-arm task, the points of interest are set as
$T_1 = [-0.15,-0.475,0.675, \frac{\pi}{2},0,0]^\top$, $T_2= [-0.6,0,2.5,0,-\frac{\pi}{2},-\frac{\pi}{2}]^\top$, $T_3= [-0.025,0.595,0.6,-\frac{\pi}{2},0,\pi]^\top$, and $T_4= [-0.525,-0.55,0.28,\pi,0,-\frac{\pi}{2}]^\top$,
and the corresponding joint-angle vectors as $c_1 = [-0.07, -1.05, 0.45, 2.3, 1.37, -1.33]^\top$, $c_2=
[1.28, 0.35, 1.75, 0.03, 0.1, -1.22]^\top$, $c_3= [-0.08, 0.85, -0.23, 2.58, 2.09, -2.36]^\top$, $c_4 = [-0.7, -0.76, -1.05, -0.05, -3.08, 2.37]^\top$ (radians).

We set a nominal value for the time intervals as $I_i = [0,20]$ (seconds), and we create 150 problem instances by varying the following attributes: firstly, we add uniformly random offsets in $[-0.3,0.3]$ (radians) to the elements of all $c_i$, and in $[-2,2]$ (seconds) to the right end-points of the intervals $I_i$; secondly, we add random offsets to the dynamic parameters of the robot (masses and moments of inertia of the robot's links and actuators) and we set a different friction and disturbance term $d(\cdot)$, leading to a different dynamic model in (\ref{eq:exp dynamics}); thirdly, we set different sequences of visits to the points $c_i$, $i\in\{1,\dots,4\}$, as dictated by $\phi$, i.e., one trajectory might correspond to the visit sequence $( (x(0),0) \to (c_1,t_{1_1}) \to (c_2,t_{1_2}) \to (c_3,t_{1_3}) \to (c_4,t_{1_4})$, and another to $( (x(0),0) \to (c_3,t_{1_3}) \to (c_1,t_{1_1}) \to (c_4,t_{1_4}) \to (c_2,t_{1_2})$. Finally, we add uniformly random offsets in $[-0.5,0.5]$ to the initial position of the robot (from the first point of the sequence), and we set its initial velocity randomly in the interval $[0,1]^6$.

Regarding the dynamics (\ref{eq:exp dynamics}), we use the methodology described in 
(\cite{siciliano2009modelling}) to derive the $B$, $C$, and $g$ terms. We set nominal link masses and moments of inertia as $m = [1,2.5,5.7,3.9,2.5,2.5,0.7]$ (kg) and $I = [0.02,0.04,0.06,0.05,0.04,0.04,0.01]$ (kgm$^2$), respectively, and we add random offsets in $(-\frac{m}{2},\frac{m}{2})$, $(-\frac{I}{2},\frac{I}{2})$ in the created instances.
Regarding the function $d()$ used in (\ref{eq:exp dynamics}); we set $d(\bar{x},t) = d_t(t) + d_f(\bar{x})$, where 
\begin{align*}
	d_t =  A_t \begin{bmatrix}
		\sin(\eta_1 t + \varphi_1) \\
		\vdots \\
		\sin(\eta_6 t + \varphi_6)
	\end{bmatrix}, \ \ \ 
	d_f = R_t A_t \dot{x}
\end{align*}
$A_t = \textup{diag}\{\ A_{t_i} \}_{i\in\{1,\dots,6\}}\in\mathbb{R}^{6\times 6}$, $A_{t_i}$ is a random term in $(0,2 m_i)$, $\eta_i$ is a random term in $(0,1)$, $\varphi_i$ is a random term in $(0,2)$, and $R_t \in \mathbb{R}^{6\times6}$ is a matrix whose rows are set as $R_{t_i}\dot{x}_i$ and where $R_{t_i}\in\mathbb{R}^{6\times6}$ is a diagonal matrix whose diagonal elements take random values in $\{0,1\}$. 
We chose the control gains of the control policy (\ref{eq:control as}) as $k_1 = 1, k_2 = 10$, and $k_{\ell_1}=k_{\ell_2}= 10$.  

Further experimental results are depicted in Figs. \ref{fig:error_dot}-\ref{fig:adaptation}; Fig. \ref{fig:error_dot}a depicts the mean and standard deviation of $\|e_v(t)\|$ of the proposed control policy and no-neural-network one for the 50 test instances, whereas Figs. \ref{fig:u_all}a depicts the control input that results from the control policy (\ref{eq:control as}) as well as the the neural-network output. Note that the control input converges to the neural-network output, i.e., $\lim_{t\to\infty}(u(t) - u_\textup{nn}(t))  = 0$, which can be also verified by (\ref{eq:control as}) and the fact that $\lim_{t\to\infty}e_v(t) = 0$. Fig. \ref{fig:adaptation}a depicts the mean and standard deviation of the adaptation signals $\hat{\ell}_1(t)$, $\hat{\ell}_2(t)$ for the 50 test instances. Finally, Fig. \ref{fig:ur5 2} shows timestamps of one of the test trajectories, illustrating the visit of the robot end-effector to the points of interest at the pre-specified time stamps.

\begin{figure}
		\centering
		\includegraphics[trim={0cm 0cm 0cm 0cm},width=.8\textwidth]{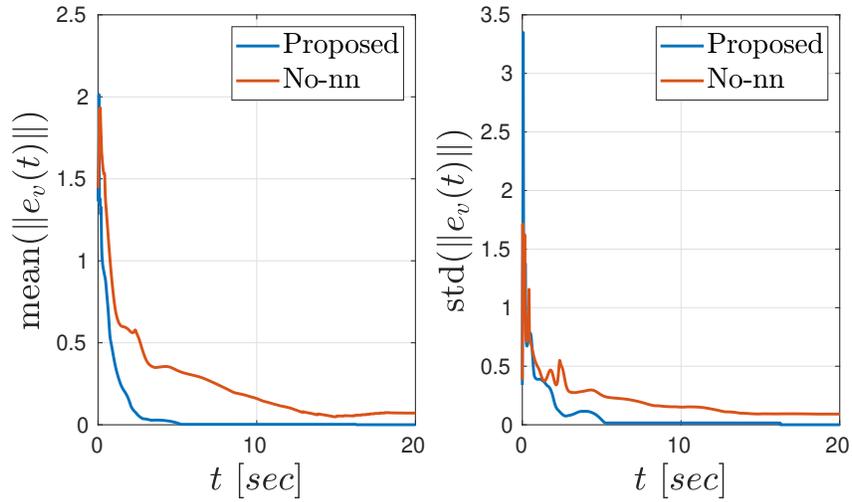}		
		\caption{Mean (left) and standard deviation (right) of ${e}_v(t)$ for the proposed and no-neural-network control policies.}
		\label{fig:error_dot}
\end{figure}

\begin{figure}
	\centering
	\includegraphics[trim={0cm 0cm 0cm 0cm},width=.95\textwidth]{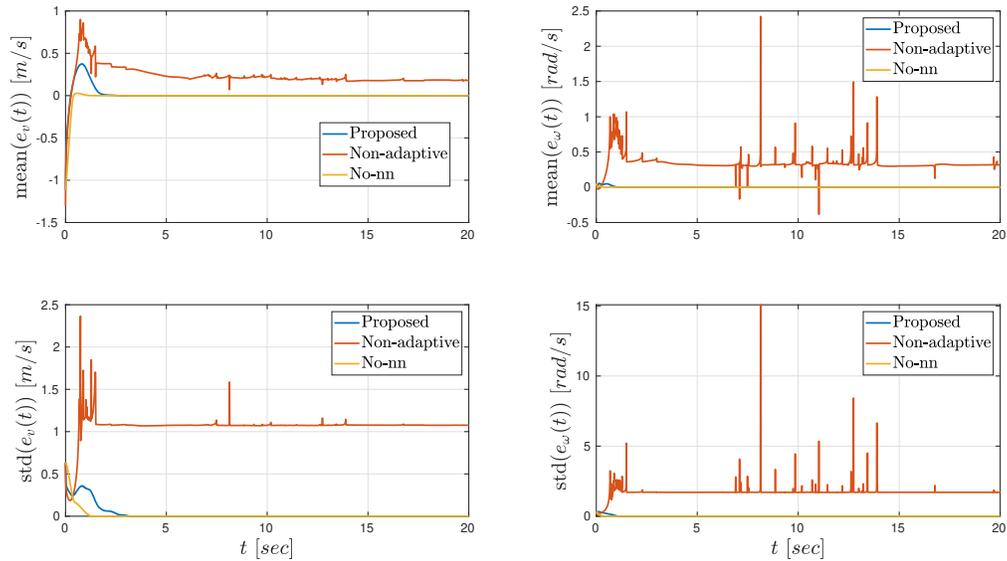}
	\caption{Mean (top) and standard deviation (bottom) of $e_v(t)$ and $e_\omega(t)$ for the proposed, non-adaptive, and no-neural-network control policies.}
	\label{fig:unicycle velocity errors}
\end{figure}


\begin{figure}
	\begin{subfigure}[b]{.5\textwidth}
		\centering
		\includegraphics[width=.85\textwidth]{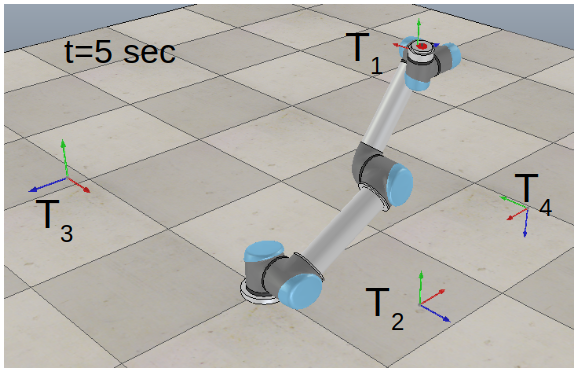}
		\caption{}
	\end{subfigure}
	~
	\begin{subfigure}[b]{.5\textwidth}
		\centering
		\includegraphics[height=.168\textheight,width=.85\textwidth]{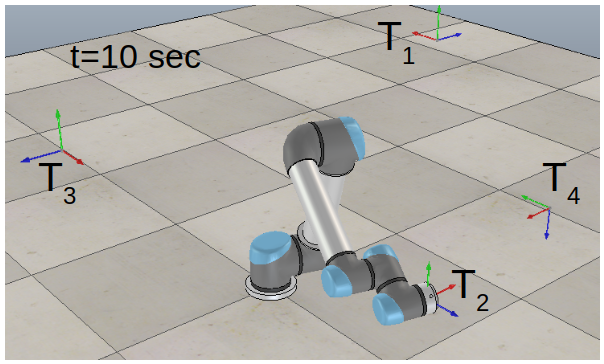}
		\caption{}
	\end{subfigure}
	~
	\begin{subfigure}[b]{.5\textwidth}
		\centering
		\includegraphics[height=.16\textheight,width=.85\textwidth]{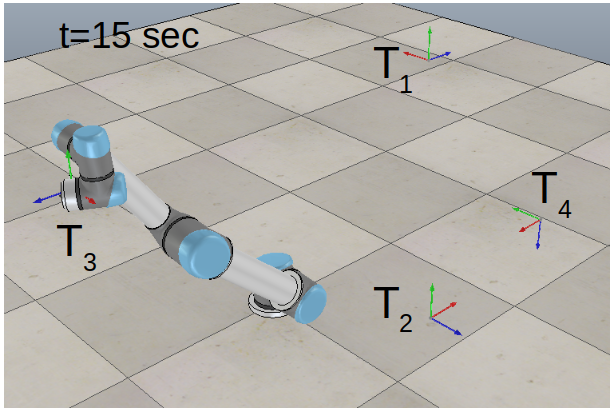}
		\caption{}
	\end{subfigure}
	~
	\begin{subfigure}[b]{.5\textwidth}
		\centering
		\includegraphics[width=.85\textwidth]{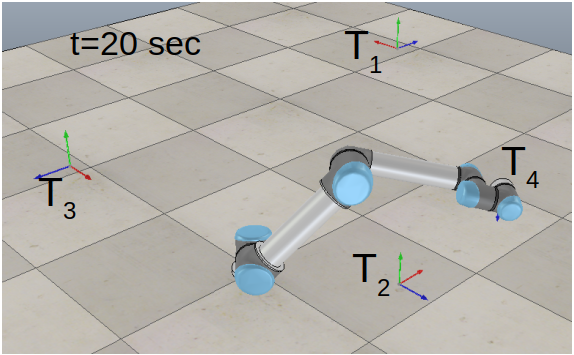}
		\caption{}
	\end{subfigure}
	\caption{Illustration of the execution of one of the test trajectories of the UR5 robotic arm, visiting the points of interest at the pre-specified time stamps.}
	\label{fig:ur5 2}
\end{figure}

Regarding the unicycle experiments, the dynamic terms in (\ref{eq:dynamics unicycle}) have the form 
\begin{align*}
&	M = \begin{bmatrix}
		M_1 & M_2 \\
		M_2 & M_1
	\end{bmatrix} \\	
&	f_\theta(\bar{x},t) = d(\bar{x},t)
\end{align*}
with $M_1 \coloneqq \frac{mr^2}{4} + \frac{(I_C + md^2)r^2}{4R^2} + I_0$, $M_2 \coloneqq \frac{mr^2}{4} - \frac{(I_C + md^2)r^2}{4R^2}$, and where $I_C$ is the moment of inertia of the vehicle with respect to point C (see Fig. \ref{fig:unicycle}), $I_0$ is the moment of inertia of the the wheels, and $d$ is the distance of the between point C and the vehicle's center of mass $(p_1,p_2)$. The term $d(\bar{x},t)$ is chosen as in the robotic-manipulator case. The points to visit are chosen here as $c_1 = [0,0]^\top$, $c_2 = [0,2]^\top$, $c_3 = [2,0]^\top$, $c_4 = [2,2]^\top$ and $I_i$ is chosen as $I_i=[0,20]$, $i\in\{1,\dots,4\}$.
We derive 150 problem instances by varying the following attributes: we vary $c_i$ with random offsets in $[-0.3,0.3]$ and the right end-points of $I_i$ with random offsets in $[-2,2]$; we add random offsets to the dynamic parameters (elaborated subsequently) and the function $d(\bar{x},t)$, and we set different sequences of visits to the points $c_i$; we further vary the unicycle's initial position from the first $c_i$ with random offsets in $[-0.3,0.3]$, and its initial orientation with random offsets in $[-0.25,0.25]$ (rad) from $\theta(0) = \arctan(e_2(0)/e_1(0))$; we further set random values in $[-0.25,0.25]$ (rad/s) to the initial wheel velocities. 

We set nominal values for the dynamic and geometric parameters, appearing in the inertia matrix, as $m = 28$ (kg), $I_C = 0.1$ (kgm$^2$), $I_0 = 0.01$ (kgm$^2$), $r = 0.01$ (m), $d=0.01$ (m), and we added random offsets in $(-\frac{m}{2},\frac{m}{2})$, $(-\frac{I_A}{2},\frac{I_A}{2})$, $(-\frac{I_0}{2},\frac{I_0}{2})$, $(-\frac{r}{2},\frac{r}{2})$, $(-\frac{d}{2},\frac{d}{2})$, respectively, for the problem instances. We chose the control gains of (\ref{eq:control as unicycle}) as $k_d=0.25$, $k_\beta=1$, $k_v=k_\omega = k_{\ell_1} = k_{\ell_2} = 10$, $k_v = k_\omega = 1$. 
The non-adaptive control policy we compared our algorithm with was selected as   
\begin{align*}
	 u_S &= M_S \dot{v}_\textup{d} - k_ve_v  + e_d \cos\beta - \beta\frac{\sin\beta}{e_d} \\
	 u_D &= M_D \dot{\omega}_\textup{d} - k_\omega e_\omega + \beta
\end{align*}
where $v_\textup{d}$, $\omega_{\textup{d}}$ are chosen as (\ref{eq:v_des unicycle}) and
$M_S$, $M_D$ are static estimates of $2\frac{M_1+M_2}{r}$, $2R\frac{M_1-M_2}{r}$, respectively. More specifically, we set a deviation of $25\%$ in the parameters appearing in these terms to form $M_S$ and $M_D$. Further experimental results are depicted in Figs. \ref{fig:error_dot}-\ref{fig:adaptation}; Fig. \ref{fig:error_dot}b depicts the mean and standard deviation of the velocity errors $e_v(t)$, $e_\beta(t)$ for the 50 test instances, showing convergence to zero, whereas Figs. \ref{fig:u_all}b depicts the control input that results from the control policy (\ref{eq:control as}) as well as the the neural-network output. Similarly to the robotic-manipulator case, the control input converges to the neural-network output. Finally, Fig. \ref{fig:adaptation}b depicts the mean and standard deviation of the adaptation signals $\hat{\ell}_v(t)$, $\hat{\ell}_\beta(t)$, $\hat{\ell}_1(t)$, $\hat{\ell}_2(t)$ for the 50 test instances.

\begin{figure}
	\begin{subfigure}[b]{0.5\textwidth}
		\centering
		\includegraphics[trim={0cm 0cm 0cm 0cm},width=\textwidth]{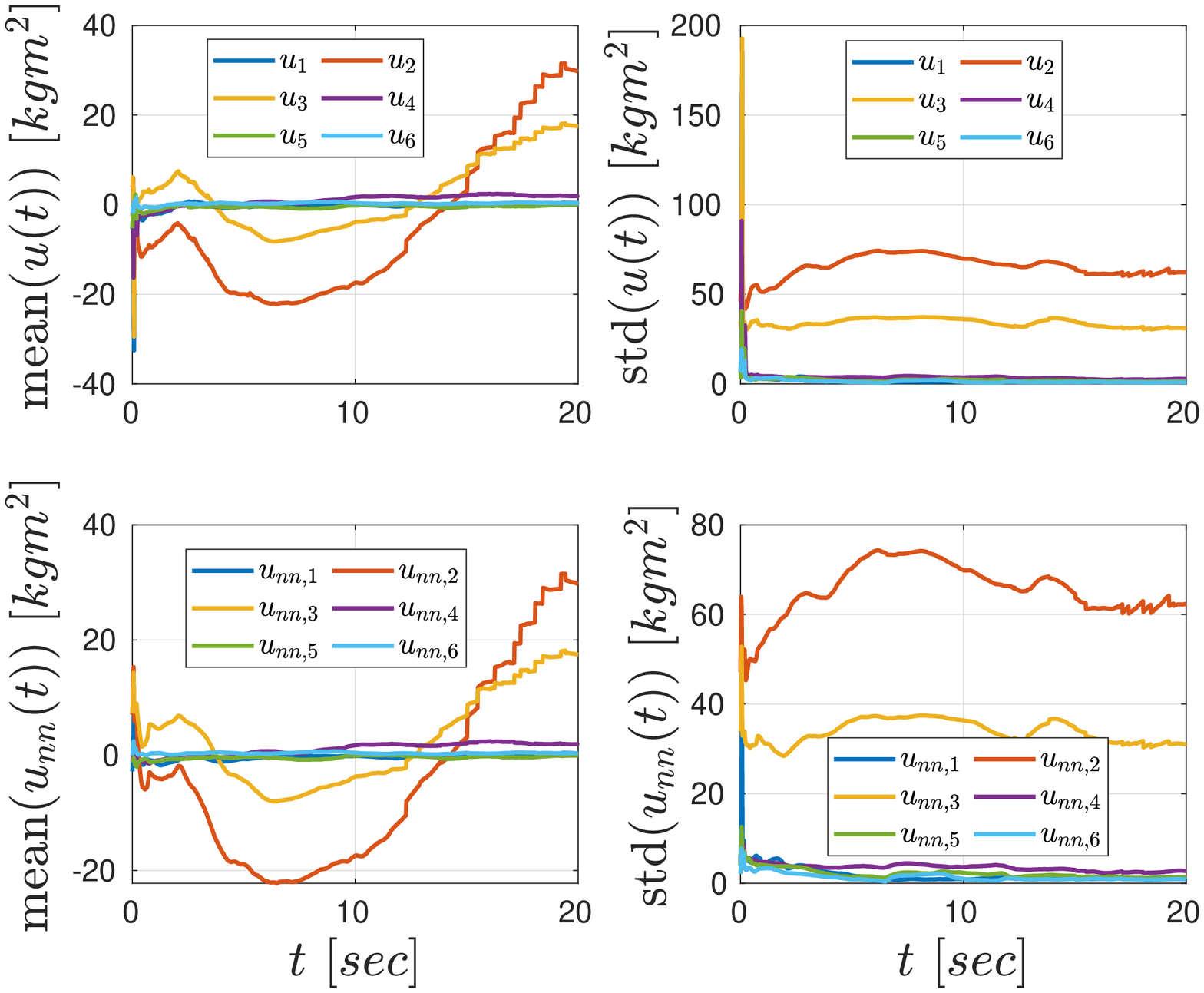}		
		\caption{}
	\end{subfigure}
	~
	\begin{subfigure}[b]{0.5\textwidth}
		\centering
		\includegraphics[trim={0cm 0cm 0cm 0cm},width=\textwidth]{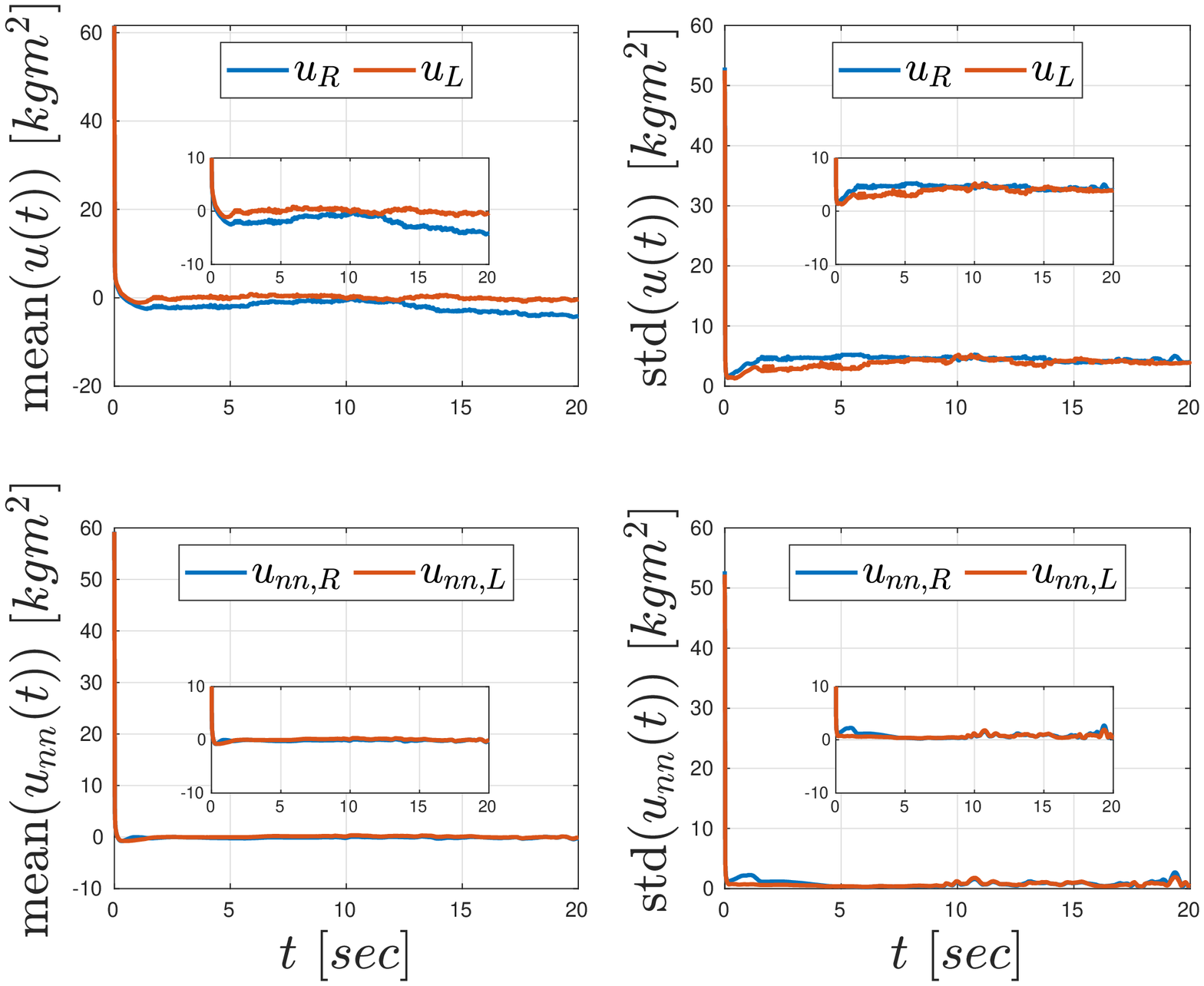}
		\caption{}
	\end{subfigure}
	\caption{(a): Mean (left) and standard deviation (right) of $u(t)$ (top) and $u_\textup{nn}(t)$ (bottom) for the proposed control policy and the robotic-manipulator environment. (b): Mean and standard deviation (right) of $u(t)$ (top) and $u_\textup{nn}(t)$ (bottom) for the proposed control policy and the unicycle environment.}
	\label{fig:u_all}
\end{figure}

\begin{figure}
	\begin{subfigure}[b]{0.5\textwidth}
		\centering
		\includegraphics[trim={0cm 0cm 0cm 0cm},width=.9\textwidth]{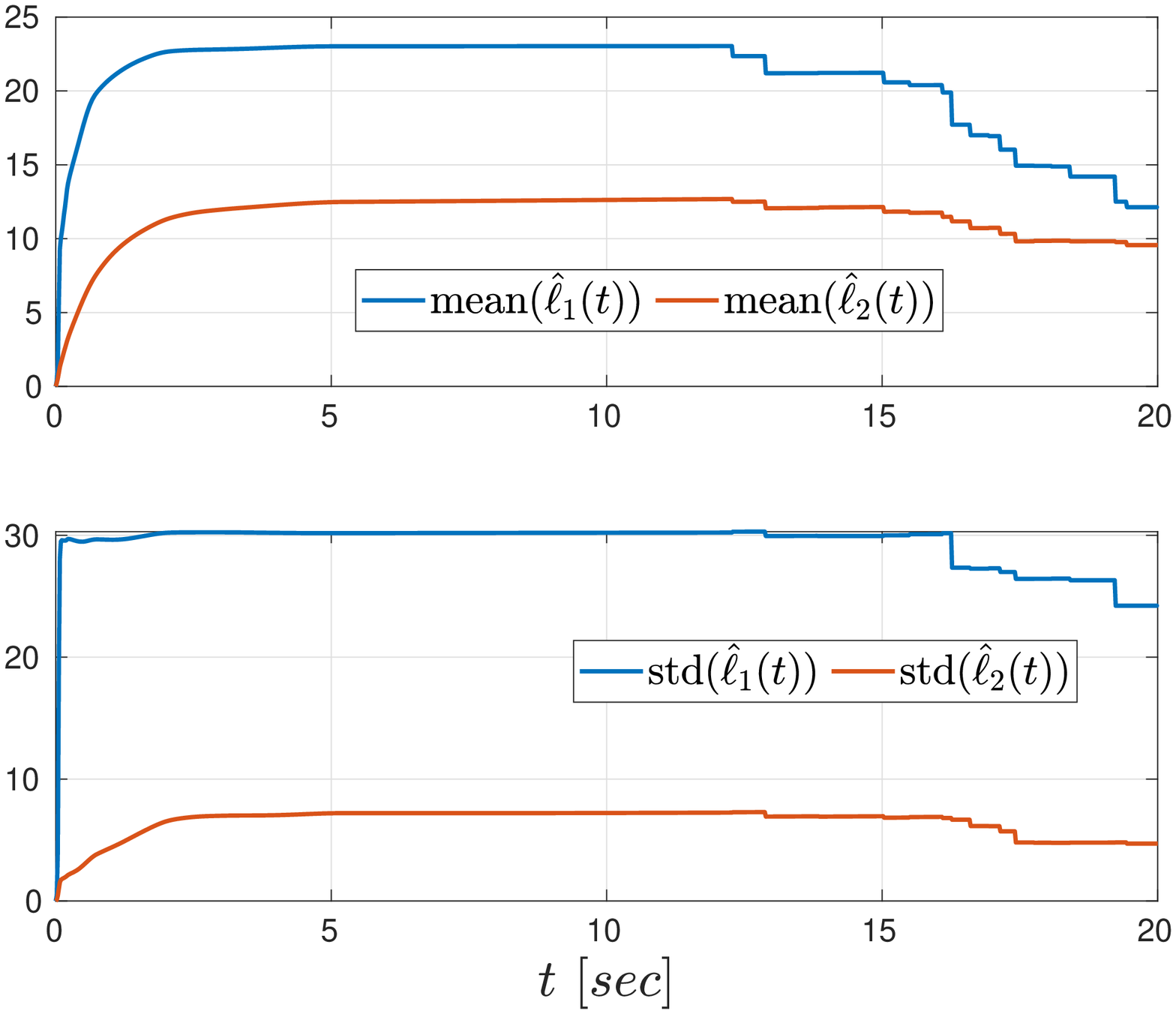}		
		\caption{}
	\end{subfigure}
	~
	\begin{subfigure}[b]{0.5\textwidth}
		\centering
		\includegraphics[trim={0cm 0cm 0cm 0cm},width=.9\textwidth]{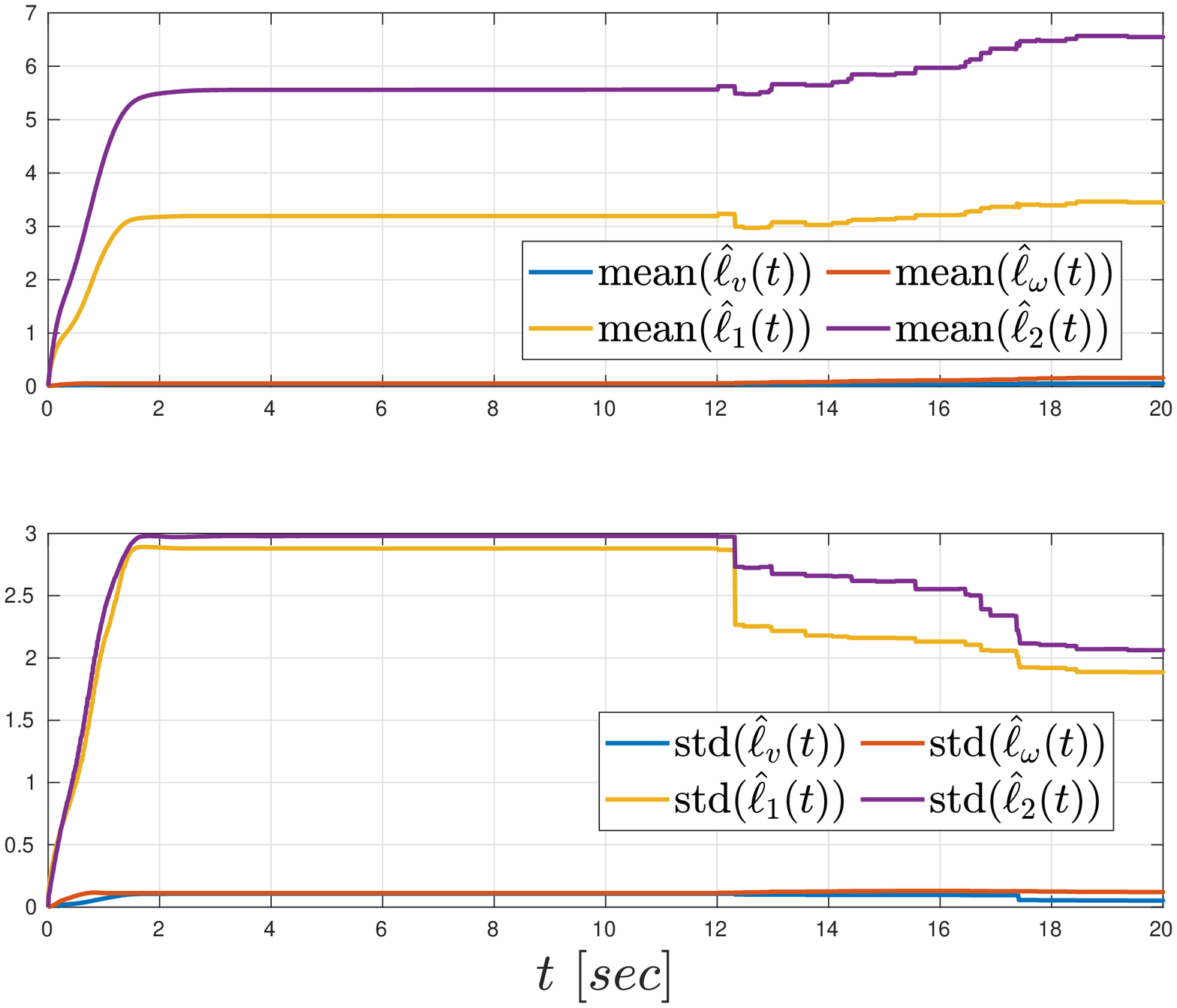}
		\caption{}
	\end{subfigure}
	\caption{(a): Mean (top) and standard deviation (bottom) of $\hat{\ell}_1(t)$, $\hat{\ell}_2(t)$ for the proposed control policy and the robotic-manipulator environment. (b): Mean (top) and standard deviation (bottom) of $\hat{\ell}_v(t)$, $\hat{\ell}_\omega(t)$, $\hat{\ell}_1(t)$, $\hat{\ell}_2(t)$ for the proposed control policy and the unicycle environment.}
	\label{fig:adaptation}
\end{figure}

Finally, regarding the pendulum environment, we consider the dynamics
\begin{align*}
	\ddot{q} = \frac{g}{L}\sin q + u + d(t)
\end{align*}
where $L$ is the pendulum's link length, $g$ is the gravitational constant, and $d(t)$ is a term of exogenous disturbances. We created 150 instances by varying the mass and link length of the pendulum, the external disturbances, and setting its initial position and velocity randomly in $[-1,1]$ (rad) and $[-1,1]$ (rad/s), respectively. We selected unitary value for its nominal link length, and we added random offsets in $(-0.5,0.5)$ in each instance. We set the external disturbances  as $d = A \sin(\eta t + \phi)$, with $A$, $\eta$, and $\phi$ taking random values in $[0,0.2]$, $[0,1]$, $[0,2]$, respectively. In contrast to the robotic manipulator case, we assume feedback of $\sin q$, $\cos q$, $\dot{q}$, and set the error $e$ in (\ref{eq:v d}) as $1-\cos(q-\pi)$. Finally, we chose the control gains as $k_1=k_2=1$, and $k_{\ell_1} = k_{\ell_2} = 10$.

\end{document}